\newlength\myheight
\newlength\mydepth
\settototalheight\myheight{Xygp}
\newcommand*\inlinegraphics[1]{%
  \settototalheight\myheight{Xygp}%
  \settodepth\mydepth{Xygp}%
  \raisebox{-\mydepth}{\includegraphics[height=\myheight]{#1}}%
}
\newcommand{\ie}{\textit{i}.\textit{e}., }
\newcommand{\eg}{\textit{e}.\textit{g}.\ }
\DeclareMathOperator*{\argmax}{argmax}
\newtheorem{example}{Example}
\newtheorem{proposition}{Proposition}
\theoremstyle{definition}
\newtheorem{definition}{Definition}
\newtheorem{obs}{Observation}
\theoremstyle{remark}
\newtheorem{remark}{Remark}
\title{Improving Neural-based Classification with Logical Background Knowledge}
\author{
Arthur Ledaguenel$^{1,2}$
\and
Céline Hudelot$^2$\and
Mostepha Khouadjia$^1$\\
\affiliations
$^1$IRT SystemX\\
$^2$MICS, CentraleSupélec\\
\emails
arthur.ledaguenel@irt-systemx.fr
}
\begin{document}

\maketitle

\begin{abstract}
Neurosymbolic AI is a growing field of research aiming to combine neural networks learning capabilities with the reasoning abilities of symbolic systems. This hybridization can take many shapes. In this paper, we propose a new formalism for supervised multi-label classification with propositional background knowledge. We introduce a new neurosymbolic technique called semantic conditioning at inference, which only constrains the system during inference while leaving the training unaffected. We discuss its theoritical and practical advantages over two other popular neurosymbolic techniques: semantic conditioning and semantic regularization. We develop a new multi-scale methodology to evaluate how the benefits of a neurosymbolic technique evolve with the scale of the network. We then evaluate experimentally and compare the benefits of all three techniques across model scales on several datasets. Our results demonstrate that semantic conditioning at inference can be used to build more accurate neural-based systems with fewer resources while guaranteeing the semantic consistency of outputs.
\end{abstract}

\section*{Introduction}

Neurosymbolic AI is a growing field of research aiming to combine neural network learning capabilities with the reasoning abilities of symbolic systems. This hybridization can take many shapes depending on how the neural and symbolic components interact, like shown in \cite{Kautz2022T}.

An important sub-field of neurosymbolic AI is Informed Machine Learning \cite{VonRueden2023}, which studies how to leverage background knowledge to improve neural-based systems. There again, proposed techniques in the literature can be of very different nature depending on the type of task (\eg regression, classification, detection, generation, etc.), the formalism used to represent the background knowledge (\eg mathematical equations, knowledge graphs, logics, etc.), the stage at which knowledge is embedded (\eg data processing, neural architecture design, learning procedure, inference procedure, etc.) and benefits expected from the hybridization (\eg explainability, performance, frugality, etc.).

The contributions and outline of the paper are the following. After preliminary notions in Section \ref{sec:prem}, we introduce in Section \ref{sec:formalism} our new formalism to represent a task of supervised multi-label classification with propositional background knowledge and to describe neural-based classification systems. Then we build upon this formalism in Section \ref{sec:sci} to re-frame existing neurosymbolic techniques, define ours, semantic conditioning at inference, and compare their properties, benefits and implementations. In Section \ref{sec:experiments}, we present our multi-scale evaluation methodology and compare the results for all techniques on three different tasks. We discuss related works in Section \ref{sec:related} and conclude with possible future research questions in Section \ref{sec:conclusion}. Proofs for all stated propositions can be found in the supplementary materials.

\section{Preliminaries} \label{sec:prem}
\subsection{Propositional Logic}
A \textbf{propositional signature} is a set $\mathcal{S}$ of symbols called \textbf{variables} (\eg $\mathcal{S} = \{a,b\}$). A \textbf{propositional formula} is formed inductively from variables and other formulas by using unary ($\neg$, which expresses negation) or binary ($\lor, \land$, which express disjunction and conjunction respectively) connectives (\eg $\kappa = a \land b$ which is \textit{true} if both variables $a$ and $b$ are \textit{true}). A \textbf{model} is an application $\nu:\mathcal{S} \mapsto \{0, 1\}$. A model can be inductively extended to define a \textbf{valuation} $\nu^*$ on all formulas using the standard semantics of propositional logic (\eg $\nu^*(a \land b) = \nu(a) \times \nu(b)$). We say that a model $\nu$ \textbf{entails} a formula $\kappa$, noted $\nu \models \kappa$, if $\nu^*(\kappa) = 1$. We say that a formula is \textbf{satisfiable} when it is entailed by at least one model. We use the symbol $\top$ to represent \textbf{tautologies} (\ie formulas which are entailed by all models). Two formulas are said \textbf{equivalent}, noted $\kappa \equiv \gamma$, if they are entailed by exactly the same models. We refer to \cite{Russell2021} for more details on propositional logic.

\subsection{Distributions}
One challenge of neurosymbolic AI is to bridge the gap between the discrete nature of logic and the continuous nature of neural networks. In this section, we define distributions and introduce concepts from probabilistic propositional logic that will provide the interface between these two realms.

A \textbf{distribution} on \textbf{binary variables} $\{Y_j\}_{1\leq j \leq k}$ is an application $\mathbf{E}:\{0, 1\}^k \mapsto \mathbb{R}^+$. The \textbf{null distribution} is the application that maps all states of $\{0, 1\}^k$ to $0$. The \textbf{partition function} $\mathsf{Z}:\mathbf{E} \mapsto \sum_{\mathbf{y} \in \{0, 1\}^k} \mathbf{E}(\mathbf{y})$ maps each distribution to its sum, and we note $\overline{\mathbf{E}} := \frac{\mathbf{E}}{\mathsf{Z}(\mathbf{E})}$ the normalized distribution (when $\mathbf{E}$ is non-null). In the remaining of the paper, we will use $\mathbf{P}$ to indicate probability (\ie normalized) distributions. The \textbf{mode} of a distribution $\mathbf{E}$ is its most probable state, \ie $\underset{\mathbf{y} \in \{0, 1\}^k}{\argmax}\mathbf{E}(\mathbf{y})$.

A standard distribution in machine learning is the exponential distribution.
\begin{definition}
    Given an activation vector $\mathbf{a} \in \mathbb{R}^k$, we define the \textbf{exponential distribution} as:
    \begin{equation*}
    \mathbf{E}(\cdot | \mathbf{a}):  \mathbf{y} \mapsto \prod_{1 \leq i\leq k} e^{a_i.y_i}
    \end{equation*} 
    We will also note $\mathbf{P}(\cdot | \mathbf{a}) = \overline{\mathbf{E}(\cdot | \mathbf{a})}$ the corresponding normalized probability distribution.
\end{definition}

\begin{remark}
The exponential probability distribution is the joint distribution of independent Bernouilli variables $\mathcal{B}(p_i)_{1 \leq i \leq k}$ with $p_i = \mathsf{s}(a_i)$, where $\mathsf{s}(\mathbf{a}) = (\frac{e^{a_j}}{1 + e^{a_j}})_{1\leq j \leq k}$ is the sigmoid function.
\end{remark}

\begin{definition}
The \textbf{semantic projection} of a distribution $\mathbf{E}$ on a propositional formula $\kappa$, is the distribution:
        \begin{equation}
            \mathbf{E}(\cdot \land \kappa): \mathbf{y} \to \mathbf{E} . \mathds{1}[\mathbf{y} \models \kappa]
        \end{equation}
This operation consists of carving out invalid states (\eg states that do not entail the propositional formula $\kappa$) from the distribution.

The \textbf{probability} of a \textbf{propositional formula} $\kappa$ under a probability distribution $\mathbf{P}$ is given by:
    \begin{equation}
        \mathbf{P}(\kappa) := \mathsf{Z}(\mathbf{P}(\cdot \land \kappa)) = \sum_{\mathbf{y}} \mathbf{P}(\mathbf{y}\land \kappa) = \sum_{\mathbf{y} \models \kappa} \mathbf{P}(\mathbf{y})
    \end{equation}

\begin{remark}
    Since $\mathbf{P}(\cdot | \mathbf{a})$ is strictly positive (for all $\mathbf{a}$), if $\kappa$ is satisfiable, then $\mathbf{P}(\kappa | \mathbf{a}) > 0$.
\end{remark}
\end{definition}

\begin{definition}
    The \textbf{semantic conditioning} of a probability distribution $\mathbf{P}$ by a \textbf{satisfiable} propositional formula $\kappa$ gives:
\begin{equation}
    \mathbf{P}(\cdot | \kappa): \mathbf{y} \to \overline{\mathbf{P}(\mathbf{y} \land \kappa)}
\end{equation}
\end{definition}

\begin{remark}
    When using projection and conditioning on an exponential distribution, we note:
    \begin{gather}
        \mathbf{P}(\kappa | \mathbf{a}):=\mathsf{Z}(\mathbf{P}(\cdot | \mathbf{a}).\mathds{1}[\mathbf{y} \models \kappa]) \\
        \mathbf{P}(\cdot | \mathbf{a}, \kappa):=\frac{\mathbf{P}(\cdot | \mathbf{a}).\mathds{1}[\mathbf{y} \models \kappa]}{\mathbf{P}(\kappa | \mathbf{a})}
    \end{gather}
    Since $\mathbf{P}(\cdot | \mathbf{a})$ is strictly positive (for all $\mathbf{a} \in \mathbb{R}^k$), if $\kappa$ is satisfiable, then $\mathbf{P}(\kappa | \mathbf{a}) > 0$.
    Computing $\mathbf{P}(\kappa | \mathbf{a})$ is a \textbf{counting} problem called Probabilistic Query Estimation (PQE). Computing the mode of $\mathbf{P}(\cdot | \mathbf{a}, \kappa)$ is an \textbf{optimization} problem called Maximum a posteriori (MAP) estimation. Solving these problems is at the core of several neurosymbolic techniques that we will introduce later.
\end{remark}

\section{Multi-label classification with background knowledge} \label{sec:formalism}
In this section, we define a formalism for \textbf{multi-label classification with background knowledge} that enables to explicitly express the structure of the output space. The objective is to model the relationship between an input domain $\mathcal{X}:=\mathbb{R}^d$ and a discrete output domain $\mathcal{Y}:=\{0, 1\}^k$ on which we have background knowledge. This background knowledge is expressed as a \textbf{satisfiable} propositional formula $\kappa$ over the signature $\mathcal{S}:= \{Y_j\}_{1\leq j \leq k}$ (one symbol per output variable). There is a direct mapping between an element of the output domain $\mathbf{y} \in \{0, 1\}^k$ and the model of $\mathcal{S}$ that maps each variable $Y_j$ to the binary value $y_j$. Therefore, we will note abusively $\mathbf{y} \models \kappa$ to denote that the model corresponding to $\mathbf{y}$ entails the propositional formula $\kappa$. Hence, $\kappa$ determines which outputs $\mathbf{y} \in \mathcal{Y}$ are consistent with our background knowledge on the task (\ie $\mathbf{y} \models \kappa$).

A dataset for such a task is $\mathcal{D}=(x^i, \mathbf{y}^i)_{1\leq i \leq n}$ with input samples $x^i \in \mathbb{R}^d$ and labels $\mathbf{y}^i \in \{0, 1\}^k$, such that all labels entail the background knowledge (\ie $\forall 1\leq i \leq n, \mathbf{y}^i \models \kappa$). In this definition, datasets are assumed to provide \textbf{full supervision} on all classes and to always be \textbf{consistent}. However, some techniques allow for a relaxation of both assumptions, enabling to use imperfect background knowledge and weakly or semi-supervised datasets.

In deep learning, we often adopt a functional framework in the sense that we assume a functional dependency between input and output domains and design the neural network to model that dependency. A differentiable cost function is used to measure the distance between the predictions and the labels, and the weights of the network are optimized using backpropagation to minimize the empirical cost. However, in case of classification, such a framework cannot be applied strictly: since the output space is discrete, a differentiable cost function cannot be defined directly on it.

Hence, we adopt a slightly modified framework, that we here call pseudo-functional, where a third module (besides the neural and loss modules) has to be defined :
\begin{definition} \label{def:neural_system}
A \textbf{neural-based classification system} (see Figure \ref{fig:class-syst}) for multi-label classification is the given of :
\begin{itemize}
    \item a \textbf{parametric differentiable} (\ie neural) module $\mathsf{M}$, called the \textbf{model}, which takes as inputs $x \in \mathbb{R}^d$, parameters $\theta \in \Theta$ and outputs $\mathsf{M}(x, \theta) := \mathsf{M}_{\theta}(x):=\mathbf{a} \in \mathbb{R}^k$, called the \textbf{activation vector}.
    \item a \textbf{non-parametric differentiable} module $\mathsf{L}$, called the \textbf{loss} module, which takes $\mathbf{a} \in \mathbb{R}^k$ and $\mathbf{y} \in \{0, 1\}^k$ as inputs and outputs a scalar.
    \item a \textbf{non-parametric} module $\mathsf{I}$, called the \textbf{inference} module, which takes $\mathbf{a} \in \mathbb{R}^k$ as input and outputs a prediction $\hat{\mathbf{y}} \in \{0, 1\}^k$.
\end{itemize}
\end{definition}

\begin{figure}[t]
\centering
\includegraphics[width=\linewidth]{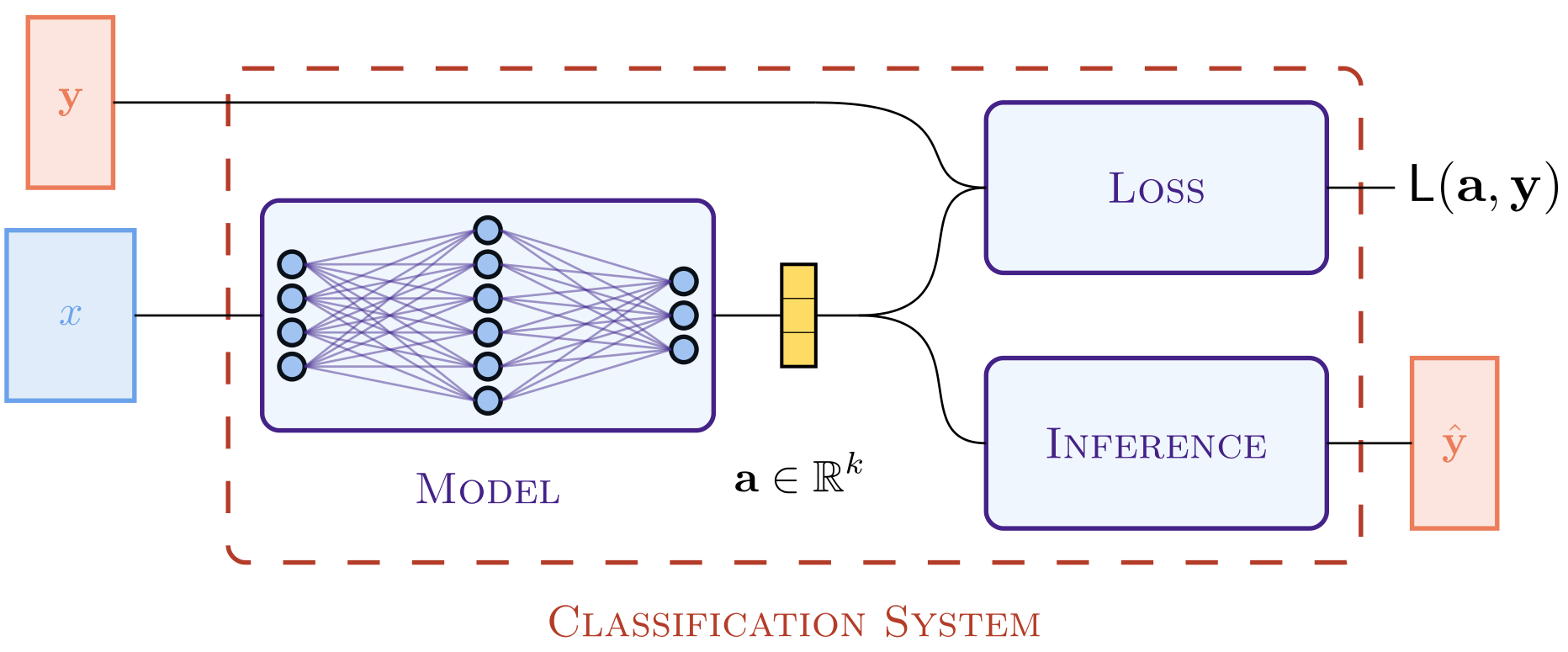}
\caption{Illustration of a neural-based classification system}
\label{fig:class-syst}
\end{figure}

This paper does not discuss the architecture of the neural model (\eg fully connected, convolutional, transformer-based, etc.) which mainly depends on the modality of the input space (\eg images, texts, etc.), but rather focuses on the two other modules, to embed the structural prior we have on the output space (\ie the background knowledge). We give a few examples of standard classification structures that can be explicitly specified using the background knowledge and their corresponding loss and inference modules.

\begin{example} \label{ex:independent}
\textbf{Independent multi-label} classification (\textit{imc}) corresponds to $\kappa=\top$: since all variables are independent, every combination is semantically valid \footnote{The symbol $\perp$ is used to represent the independence assumption and and has nothing to do with its logical meaning to designate a fallacy (\ie a formula that can't be satisfied).}. The standard choice in this case is to apply a sigmoid layer on activation scores to turn them into probability scores. The loss is the binary cross-entropy between probability scores and labels, and a variable is predicted to be \textit{true} if its probability is above $0.5$ (or equivalently its activation score is above $0$). This results in the following modules:
\begin{equation}
    \begin{multlined}
        \mathsf{L}_{imc}(\mathbf{a}, \mathbf{y}) := \mathtt{BCE}(\mathsf{s}(\mathbf{a}), \mathbf{y}) \\
         = - \sum_j y_j.\log(\mathsf{s}(a_j)) + (1-y_j).\log(1-\mathsf{s}(a_j))
    \end{multlined}
\end{equation}
\begin{equation}
    \mathsf{I}_{imc}(\mathbf{a}) := \mathbf{1}[\mathbf{a} \geq 0]
\end{equation}
where $\mathtt{BCE}$ is the binary cross-entropy and $\mathbf{1}[z]:= \left\{
    \begin{array}{ll}
        1 & \mbox{if z true} \\
        0 & \mbox{otherwise}
    \end{array}
\right.$ the indicator function.
\end{example}

\begin{example}
\textbf{Categorical} classification arises when one and only one output variable is \textit{true} for a given input sample (\eg mapping an image to a single digit in $\llbracket 0, 9 \rrbracket$ for MNIST). These constraints can easily be enforced by the following propositional formula:
 \begin{equation}
     \kappa_{\odot_k} := \bigg(\bigvee_{1 \leq j \leq k} Y_j \bigg) \land \bigg(\bigwedge_{1 \leq j < l \leq k} (\neg Y_j \lor \neg Y_l) \bigg)
 \end{equation}
where the first part ensures that at least one variable is \textit{true} and the second part prevents two variables to be \textit{true} simultaneously.
For categorical classification, the sigmoid layer is replaced by a softmax layer and the variable with the maximum score is predicted, which leads to the following modules:
\begin{equation}
    \mathsf{L}_{\odot_k}(\mathbf{a}, \mathbf{y}) := \mathtt{CE}(\mathtt{s}(\mathbf{a}), \mathbf{y}) = - \log(\langle \sigma(\mathbf{a}), \odot_k(j) \rangle)
\end{equation}
\begin{equation}
    \mathsf{I}_{\odot_k}(\mathbf{a}) := \odot_k(\argmax(\mathbf{a}))
\end{equation}
where $\mathtt{CE}$ is the cross-entropy, $\sigma(\mathbf{a}) = (\frac{e^{a_j}}{\sum_l e^{a_l}})_{1\leq j \leq k}$ and $\odot_k$ gives the one-hot encoding (starting at 1) of $j \in \llbracket 1, k \rrbracket$, \eg $\odot_4(2)=(0, 1, 0, 0$).
\end{example}

\begin{example}
\textbf{Hierarchical} classification on a set of variables $\{Y_j\}_{1\leq j \leq k}$ is usually formulated with a directed acyclic graph $G=(Y, E_h)$ where the nodes are the variables and the edges $E_h$ express subsumption between those variables (\eg a dog is an animal). This formalism can even be enriched with exclusion edges $H=(Y, E_h, E_e)$ (\eg an input cannot be both a dog and a cat), like in HEX-graphs \cite{Deng2014} or AND-OR graphs \cite{Aleksander86}. There again, the translation to propositional logic is straightforward:
\begin{equation}
    \kappa_H := \bigg(\bigwedge_{(i, j) \in E_h} Y_i \lor \neg Y_j \bigg) \land \bigg(\bigwedge_{(i, j) \in E_e} (\neg Y_i \lor \neg Y_j) \bigg)
\end{equation}
where the first part ensures that a son node cannot be \textit{true} if its father node is not and the second part prevents two mutually exclusive nodes to be \textit{true} simultaneously.
In this case, there is no consensus on what modules to use for the neural-based classification system: \cite{Muller2020AHL} introduces a hierarchical loss to penalize more errors on higher classes of the hierarchy while \cite{giunchiglia_coherent_2020,Deng2014} impact both loss and inference.
\end{example}

Moreover, propositional background knowledge can be used to define very diverse valid output spaces (\eg Sudoku solutions \cite{Augustine2022VisualSP}, simple paths in a graph \cite{Xu2018,Ahmed2022spl}, preference rankings \cite{Xu2018}, etc.).

\section{Unifying neurosymbolic techniques} \label{sec:sci}
The purpose of a neurosymbolic technique is to automatically derive appropriate loss and inference modules from the propositional background knowledge, therefore generalizing the work made on independent, categorical and hierarchical cases to arbitrary structures.

We build upon our formalism to re-frame two standard neurosymbolic techniques in the literature (semantic conditioning and semantic regularization) and define our proposed technique, named \textbf{semantic conditioning at inference}.

\subsection{Semantic conditioning}
\begin{definition}
A neural-based classification system is performing \textbf{semantic conditioning} (\textit{sc}) on $\kappa$ if its loss and inference modules are:
    \begin{gather}
        \mathsf{L}_{|\kappa}(\mathbf{a}, \mathbf{y}) = - \log(\mathbf{P}(\mathbf{y} | \mathbf{a}, \kappa)) \\
        \mathsf{I}_{|\kappa}(\mathbf{a}) = \underset{\mathbf{y} \in \{0, 1\}^k}{\argmax}\mathbf{P}(\mathbf{y} | \mathbf{a}, \kappa)
    \end{gather}
\end{definition}

The generality of \textit{sc} is best illustrated by showing that standard modules for independent and categorical classification are particular cases of semantic conditioning on their respective background knowledge:
\begin{proposition} \label{prop:sem_cond_equiv}
    \begin{equation}
        \begin{split}
        \mathsf{L}_{|\top}(\mathbf{a}, \mathbf{y}) = \mathsf{L}_{imc}(\mathbf{a}, \mathbf{y}) \quad & \mathsf{I}_{|\top}(\mathbf{a}) = \mathsf{I}_{imc}(\mathbf{a}) \\
        \mathsf{L}_{|\kappa_{\odot_k}}(\mathbf{a}, \mathbf{y}) = \mathsf{L}_{\odot_k}(\mathbf{a}, \mathbf{y}) \quad & \mathsf{I}_{|\kappa_{\odot_k}}(\mathbf{a}) = \mathsf{I}_{\odot_k}(\mathbf{a})
        \end{split}
    \end{equation}
\end{proposition}

Semantic conditioning was introduced for hierarchical knowledge in \cite{Deng2014} and \cite{Ahmed2022spl} generalized it to arbitrary formulas in conjunctive normal form.

\subsection{Semantic regularization}
\begin{definition}
A neural-based classification system is performing \textbf{semantic regularization} (\textit{sr}) on $\kappa$ if its loss and inference modules are:
    \begin{gather}
        \mathsf{L}_{r(\lambda)}(\mathbf{a}, \mathbf{y}) = \mathsf{L}_{imc}(\mathbf{a}, \mathbf{y}) - \lambda.\log(\mathbf{P}(\kappa | \mathbf{a})) \\
        \mathsf{I}_{imc}(\mathbf{a}) = \underset{\mathbf{y} \in \{0, 1\}^k}{\argmax}\mathbf{P}(\mathbf{y} | \mathbf{a})
    \end{gather}
\end{definition}

This regularization term is equivalent to the semantic loss first introduced in \cite{Xu2018}.

\subsection{Semantic conditioning at inference}
Semantic conditioning at inference is derived from semantic conditioning, but only applies conditioning to the inference module (\ie infers the most probable output that entails $\kappa$).

\begin{definition}
A neural-based classification system is performing \textbf{semantic conditioning at inference} (\textit{sci}) on $\kappa$ if its loss and inference modules are :
    \begin{gather}
        \mathsf{L}_{imc}(\mathbf{a}, \mathbf{y}) = - \log(\mathbf{P}(\mathbf{y} | \mathbf{a})) \\
        \mathsf{I}_{|\kappa}(\mathbf{a}) = \underset{\mathbf{y} \in \{0, 1\}^k}{\argmax}\mathbf{P}(\mathbf{y} | \mathbf{a}, \kappa)
    \end{gather}
\end{definition}

Semantic conditioning at inference retains two key properties of semantic conditioning: \textbf{syntactic invariance} (\ie conditioning on equivalent formulas gives identical results) and \textbf{semantic consistency} at inference (\ie all inferred outputs will entail the background knowledge, untrue of semantic regularization techniques).
\begin{proposition}[Syntactic invariance] \label{prop:syntactic_invariance}
\begin{equation*}
    \kappa \equiv \gamma \implies \mathsf{I}_{|\kappa} = \mathsf{I}_{|\gamma}
\end{equation*}
\end{proposition}

\begin{proposition}[Semantic consistency] \label{prop:sem_consistency}
\begin{equation*}
    \forall \mathbf{a}, \mathsf{I}_{|\kappa}(\mathbf{a}) \models \kappa
\end{equation*}
\end{proposition}

Besides, when performing inference based on identical model modules and learned parameters, \textit{sci} \textbf{guarantees} greater or equal accuracy compared to traditional \textit{imc} inference (\ie if $\mathsf{I}_{imc}$ infers the right labels, then $\mathsf{I}_{|\kappa}$ will also infer the right labels): 
\begin{proposition} \label{prop:acc_guarantee}
\begin{equation*} 
\begin{multlined}
    \forall (x, \mathbf{y}) \in \mathcal{D}, \forall \theta \in \Theta, \\
    \mathsf{I}_{imc}(\mathsf{M}_{\theta}(x))=\mathbf{y} \implies \mathsf{I}_{|\kappa}(\mathsf{M}_{\theta}(x))=\mathbf{y}
\end{multlined}
\end{equation*} 
\end{proposition}

Semantic conditioning at inference has other impactful properties that make it a suitable choice compared to semantic conditioning and regularization. First, whereas \textit{sc} and \textit{sr} rely on PQE to compute their loss module, \textit{sci} only relies on MAP estimation for its inference module. We mention below why this has huge consequences on computational tractability and efficiency. Second, training with the standard independent multi-label classification loss leads to \textbf{independent} representations of the different variables. This enables to change background knowledge after training, for instance if the background knowledge is unavailable at training time or susceptible to evolve, which is not possible when learned representations are tightly entangled through the knowledge. This is a particularly useful property in the era of \textbf{foundation models} \cite{Bommasani21}, which are first pre-trained on massive amounts of general data to then be fine-tuned on a multitude of heterogeneous downstream tasks with different background knowledge. Finally, in a scientific perspective, it seems interesting to study the efficiency of loss and inference modules separately, to understand how much each component contributes to overall performance gains.

\subsection{Complexity and implementations} \label{sec:complexity}
Computing $\mathsf{L}_{r(\lambda)}(\mathbf{a}, \mathbf{y})$ (for \textit{sr}) and $\mathsf{L}_{|\kappa}(\mathbf{a}, \mathbf{y})$ (for \textit{sc}) rely on PQE, while computing $\mathsf{I}_{|\kappa}(\mathbf{a})$ (for \textit{sc} and \textit{sci}) relies on MAP estimation. In the general case, PQE and MPE are both $\mathtt{NP}$-hard (by reduction of the SAT problem) \cite{Suciu2020}. Hence, all three neurosymbolic techniques quickly become intractable for a large number of variables in the general case.

However, for propositional formulas of \textbf{bounded tree-width}, both PQE and MPE can be computed in time \textbf{linear in the number of variables} \cite{Darwiche2009} (and exponential in the tree-width). In our experiments for instance, these computations bring no significant overhead on top of the neural module.

Besides, counting problems are known to be much harder in general than optimization problems \cite{toda_pp_1991}, which reflects in the greater development and efficiency of optimization solvers. Since \textit{sc} and \textit{sr} rely on PQE to compute their loss module, whereas \textit{sci} only relies on MAP estimation for its inference module, there are cases where \textit{sci} is tractable while \textit{sc} and \textit{sr} are not. This property is exploited in \cite{niepert_implicit_2021} to approximate PQE using the Gumbel-max trick \cite{papandreou_perturb-and-map_2011}.


Implementation wise, we encountered two methods in the literature. The first was introduced in \cite{Deng2014}: it represents the conditioned distribution as a Conditional Random Field, uses a custom compilation algorithm to convert the propositional formula into a minimal junction tree and then applies a sparse message passing procedure. However, the compilation technique was specifically designed to process hierarchical knowledge expressed as a HEX-graph (see Section \ref{sec:formalism}). More recently, the neurosymbolic community has mainly been using knowledge compilation: \cite{Xu2018} uses Sentential Decision Diagrams \cite{Darwiche2011} to compute their semantic loss, while \cite{Ahmed2022spl} uses tractable circuits to perform both PQE and MAP for semantic conditioning. For our experiments, we implemented our own version of \cite{Deng2014} algorithm (their code was not publicly available) and tested knowledge compilation based techniques with no noticeable difference in compute time.
\footnote{Our GitHub will be made publicly available in the final version}.

\section{Experiments} \label{sec:experiments}
We showed in the previous section how \textit{sci} was provably more accurate than \textit{imc}. In this section, we illustrate to what extent this guarantee translates in practice. We also compare \textit{sci} to existing techniques \textit{sc} and \textit{sr}.



\begin{figure*}[t]
\begin{subfigure}{0.32\textwidth}
    \centering
    \includegraphics[width=\textwidth]{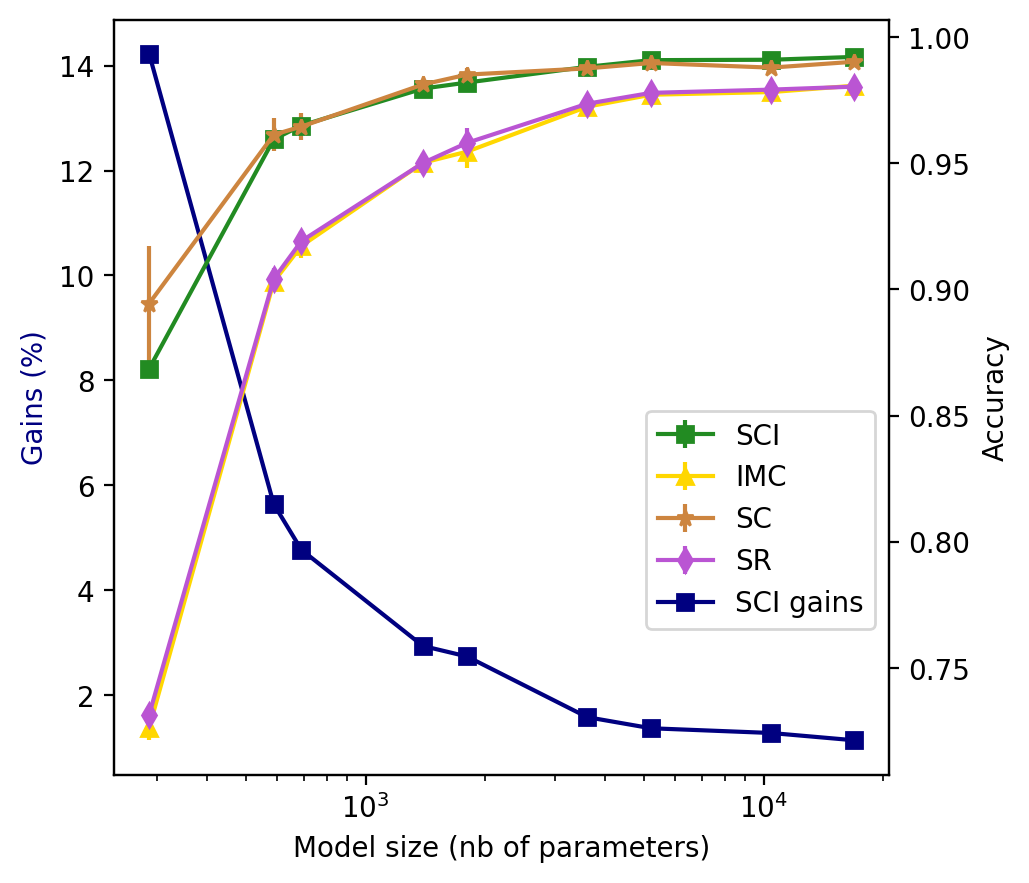}
    \caption{MNIST after 100 epochs}
    \label{fig:mnist_modelsize}
\end{subfigure}
\begin{subfigure}{0.32\textwidth}
    \centering
    \includegraphics[width=\textwidth]{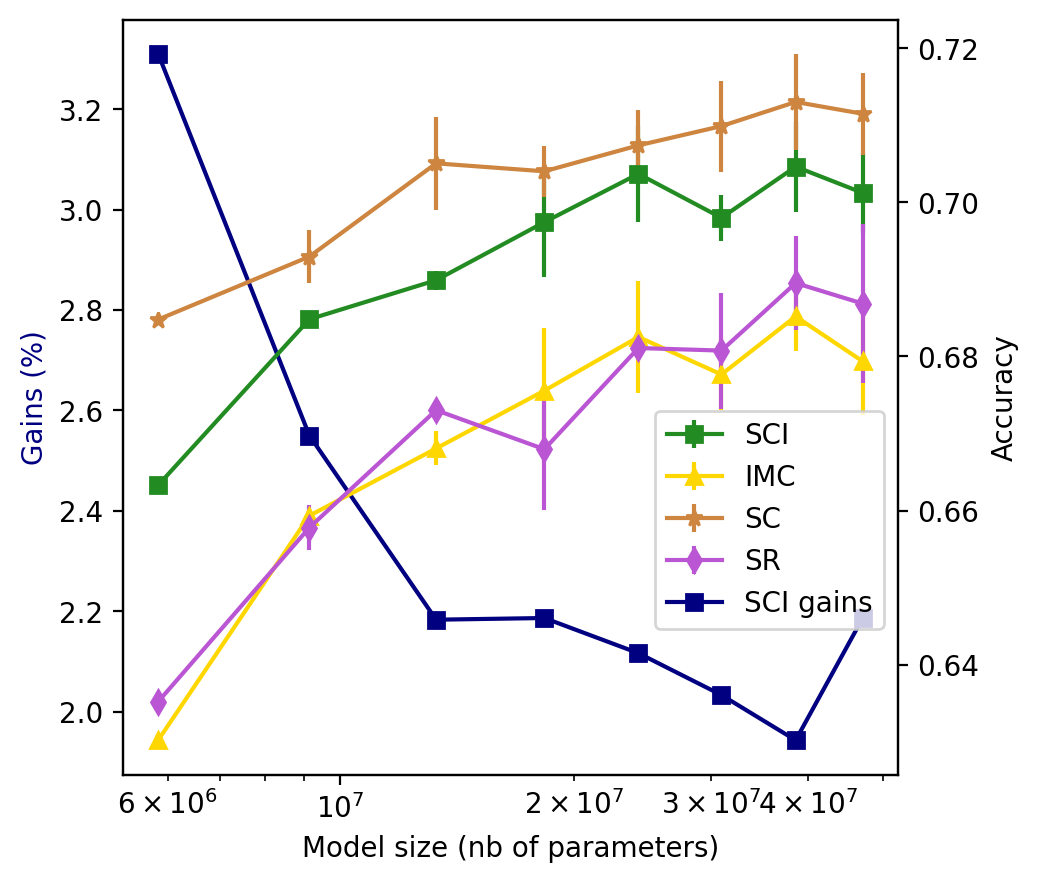}
    \caption{Cifar after 100 epochs}
    \label{fig:cifar_modelsize}
\end{subfigure}
\begin{subfigure}{0.32\textwidth}
    \centering
    \includegraphics[width=\textwidth]{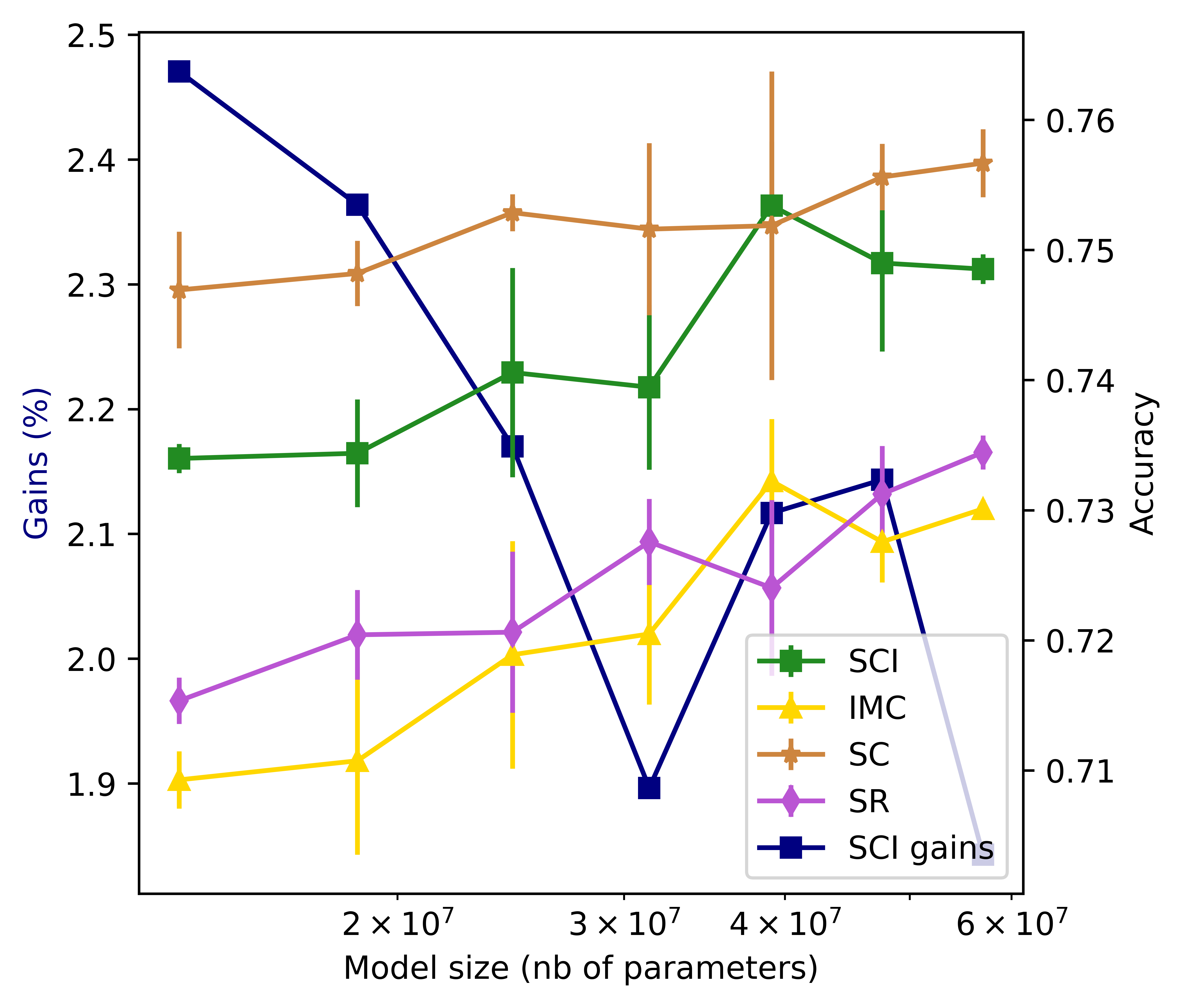}
    \caption{ImageNet after 90 epochs}
    \label{fig:imgnet_modelsize}
\end{subfigure}

\caption{Results at the last epoch on all tasks and for all four techniques (error bars show the standard deviation over several seeds)}
\label{fig:modelsize_results}
\end{figure*}

\subsection{A new multi-scale methodology}
Most papers in the field evaluate the benefits of their NeSy technique compared to other NeSy techniques or a purely neural baseline on very few performance points, usually obtained on 2-3 base networks trained on the full training set for a fixed amount of epochs. A technique will be judged relevant if it increases performance significantly in all scenarios. Although informative, such performance-driven methodology paints a very limited picture of the benefits of the technique and leaves many questions unanswered. In particular, it does not allow to estimate how these benefits evolve when resources given to the system (\eg network scale, dataset size, training length, etc.) increase. Besides, it is well known that performance is marginally increasingly more expensive to obtain in terms of resources \cite{Rosenfeld2019}, and appreciation of the value of background knowledge and NeSy techniques must take that into account. This leads to two main shortcomings. First, it provides limited insights into the sustainability of these benefits as the trend toward larger and deeper networks unfolds.  Secondly, answering frugality-driven questions through this methodology is impossible.

To overcome those limitations, we developed a multi-scale evaluation methodology that studies the dependency between the performance of the NeSy technique and the resources of the system. For each task, we selected a single architectural design that can be scaled to various sizes. Then, across networks of different scales, we compared neural-based classification systems implementing 4 different techniques : 1 purely neural (\textit{imc}) and 3 neurosymbolic (\textit{sr}, \textit{sc} and \textit{sci}). We trained each system on the training set for up to 100 epochs while evaluating the \textbf{exact accuracy} (\ie the share of instances which are well classified on all labels) on the test set at each epoch. We reproduced each experience with different seeds and aggregated the results to measure the variability and statistical significance of our results. More details on the experimental setup are given in the supplementary materials.

Moreover, this methodology allows us to model the results points recorded for each technique \cite{Rosenfeld2019} and therefore compute interesting metrics, such as asymptotic accuracy or frugality-driven like compression ratios (see the supplementary materials for more information).

\subsection{Tasks}

\subsubsection{Categorical classification} \label{sec:MNIST}
MNIST is one of the oldest and most popular dataset in computer vision and consists of small images of hand-written digits (\eg \inlinegraphics{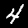} or \inlinegraphics{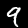}). Since its introduction in \cite{LeCun1998}, it has been used as a \textit{toy} dataset in many different settings. Likewise in neurosymbolic literature, many researchers used MNIST as a basis to build structured dataset compositionally (\eg the PAIRS dataset in \cite{Marra2020}, the MNIST-Add dataset in \cite{Manhaeve2021,Badreddine2022,Krieken2022} or the Sudoku dataset in \cite{Augustine2022VisualSP}).

We mentioned earlier (see Section \ref{sec:formalism}) how categorical classification tasks could be framed as a multi-label classification with background knowledge. For this dataset, the architectural design is a simple Convolutional Neural Network (CNN) \cite{LeCun1998} scaled to different sizes.


\begin{figure*}[t]
\begin{subfigure}{0.32\textwidth}
    \centering
    \includegraphics[width=\textwidth]{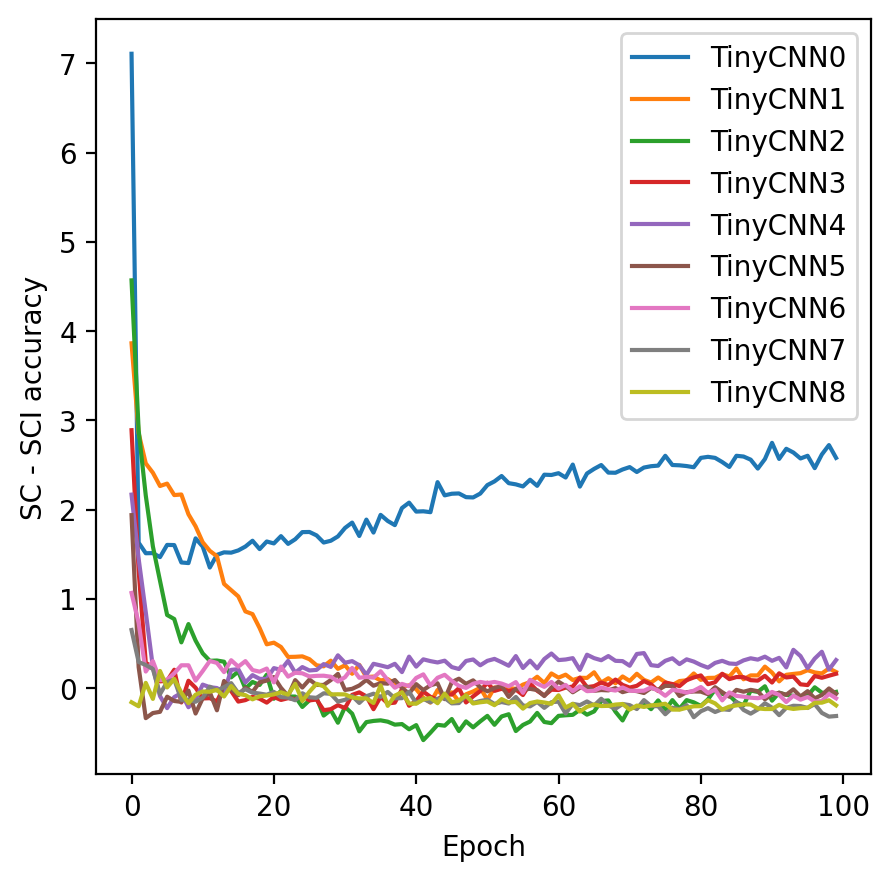}
    \caption{MNIST}
    \label{fig:mnist_delta}
\end{subfigure}
\begin{subfigure}{0.32\textwidth}
    \centering
    \includegraphics[width=\textwidth]{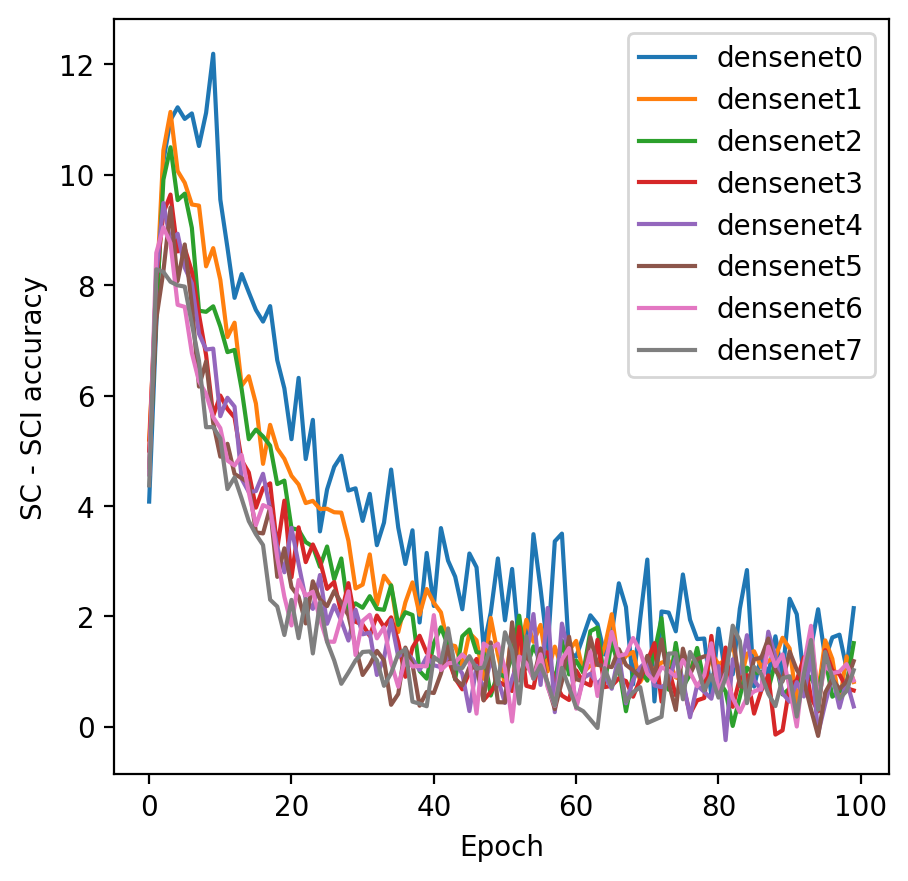}
    \caption{Cifar}
    \label{fig:cifar_delta}
\end{subfigure}
\begin{subfigure}{0.32\textwidth}
    \centering
    \includegraphics[width=\textwidth]{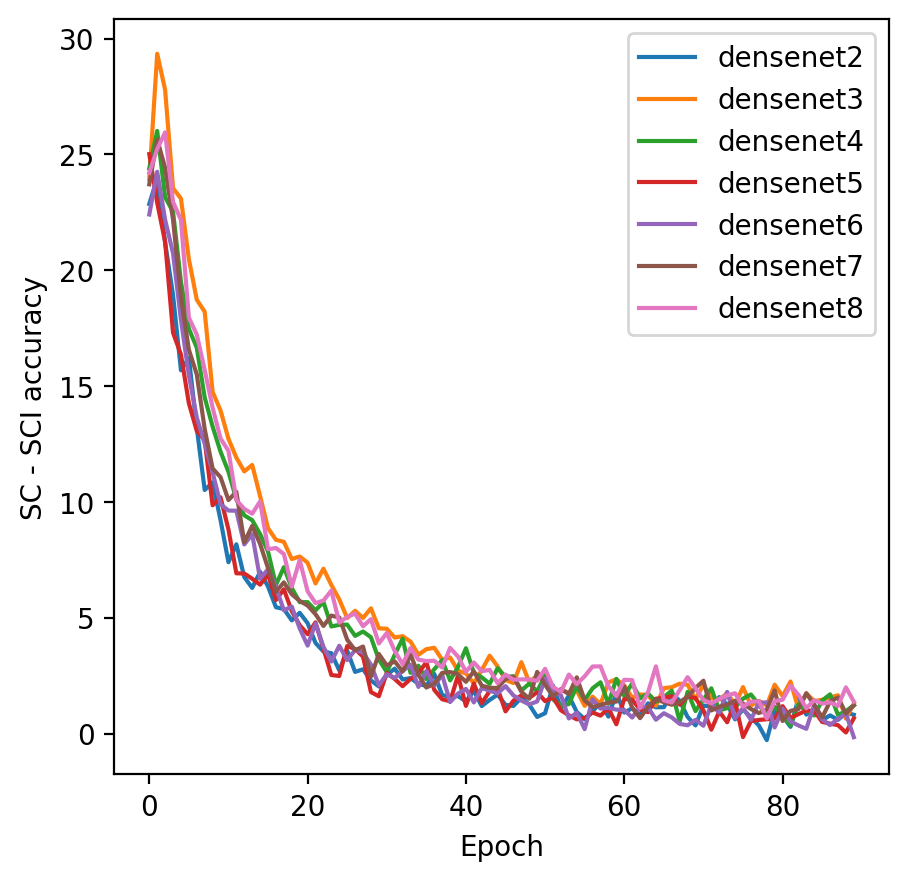}
    \caption{ImageNet}
    \label{fig:imgnet_delta}
\end{subfigure}

\caption{Accuracy gap between \textit{sc} and \textit{sci} for all model sizes on all tasks}
\label{fig:acc_gap}
\end{figure*}

\subsubsection{Hierarchical classification}
The Cifar-100 dataset \cite{Krizhevsky2009} is composed of 60,000 images classified into 20 mutually exclusive super-classes (\eg \textit{reptile}), each divided into 5 mutually exclusive fine-grained classes (\eg \textit{crocodile}, \textit{dinosaur}, \textit{lizard}, \textit{turtle}, and \textit{snake}). While most papers only consider the categorical classification task arising form the 100 fine-grained classes, we keep all 120 classes to produce a multi-label classification task where background knowledge captures mutual exclusion and the hierarchy between super and fine-grained classes.

The ImageNet Large Scale Visual Recognition Challenge (ILSVRC) \cite{Russakovsky2015} is an image classification challenge running annually since 2010 and has become a standard benchmark in computer vision to compare performances of deep learning models. As of August 2014, ImageNet contained 14,197,122 annotated images organized into 21,841 synsets of the WordNet hierarchy \cite{Miller1995}, however standard image classification tasks often use a subset of those, usually 1,000 or 100 synsets. The WordNet hierarchy defines subsumption (or inclusion) between classes, and can be used in many ways to create a task of binary multi-label classification with background knowledge.

For our experiments, we sample 100 classes from 1k ImageNet and add all their parent classes. We then prune classes that have only one parent class and one child class to avoid classes having identical sample sets. We thus obtain a dataset of ImageNet samples labeled on a hierarchy of 271 classes. Background knowledge for this task includes the hierarchical knowledge coming from WordNet, as well as exclusion knowledge that we obtain by assuming two classes having no common descendants are mutually exclusive.


For hierarchical datasets, we used networks of different sizes taken from the family of DenseNets \cite{Huang2017}.

\subsection{Results}
We synthesize our main experimental findings in the three following observations.

\begin{obs}[see Figure \ref{fig:modelsize_results}]
    For all three tasks and across model scales, techniques based on the conditioned inference module $\mathsf{I}_{|\kappa}$ (\eg \textit{sc} and \textit{sci}) \textbf{significantly outperform} those based on $\mathsf{I}_{imc}$ (\eg \textit{imc} and \textit{sr}).
\end{obs}

\begin{obs}[see Figure \ref{fig:modelsize_results}]
    The \textbf{gains tend to decrease with the scale} of the neural network and converge towards a significantly positive value.
\end{obs}

\begin{obs}[see Figure \ref{fig:acc_gap}]
    For all three tasks, \textbf{\textit{sc} trains faster than \textit{sci}}: the gap between \textit{sc} and \textit{sci} reaches its peak in the first epochs of training, and then decreases to a much lower value as both techniques converge. The larger the network, the less significant the gap is.
\end{obs}

\section{Related works} \label{sec:related}

\textbf{Alternative logics} can be used to represent the background knowledge: HEX-graphs in \cite{Deng2014}, Prolog in \cite{Manhaeve2021}, ASP in [NeurASP], grounded First Order Logic in \cite{Badreddine2022}. The trade-off for these representation languages is mainly between expressivity and tractability.

\textbf{Semantic conditioning} techniques use probabilistic logics to condition the probability distribution on the output space produced by the neural model on the background knowledge. This impacts both the loss and the inference modules. \cite{Deng2014} introduced this technique for hierarchical knowledge expressed as a HEX-graph, \cite{Ahmed2022spl} generalized the technique to propositional knowledge and \cite{Manhaeve2021} use a similar technique on ProbLog programs \cite{DeRaedt07}.

\textbf{Semantic regularization} can be understood more broadly as a class of techniques that turn logical formulas into regularization terms which are added to the standard cross-entropy loss to steer their neural models towards outputs that satisfy the background knowledge, without affecting inference. They mainly differ in the logics used to represent the background knowledge and transform it into a loss function: \cite{Xu2018} uses exact Probabilistic Propositional Logic, \cite{Badreddine2022} uses Fuzzy First Order Logic and \cite{pmlr-v206-ahmed23a} approximates Probabilistic Propositional Logic to tackle formulas which are intractable for exact methods.

\textbf{Fuzzy Logics} is sometimes seen as an alternative when the background knowledge becomes intractable to exact Probabilistic Logics. Fuzzy logics are determined by the choice of fuzzy operators, which are relaxations of binary operators $\neg, \land, \lor, \Rightarrow$. \cite{Marra2019_tnorms} shows how a set of fuzzy operators can be unambiguously determined given the selection of a t-norm generator and \cite{Krieken2020} studies the gradient flows of fuzzy operators to determine which are the most amenable to efficient gradient descent.

\textbf{Semi-supervised} learning is used to take advantage of unlabeled labeled data. Semantic regularization is well suited to semi-supervised learning because the regularization term does not depend on the ground truth label. Its efficiency has already been demonstrated in \cite{Xu2018}. By substituting the independent probability distribution in \cite{Grandvalet2004} with a distribution conditioned on background knowledge, \cite{Ahmed2022nesyer} introduces a neurosymbolic entropy regularization term which steers the neural model towards low-entropy (\ie high confidence) states that entail the semantics of the task. This regularization term can work in conjunction to either semantic conditioning or semantic regularization to improve semi-supervised performances.

\textbf{Weakly-supervised} learning tasks allow for partially labeled data. Some semantic conditioning techniques are adapted to this setting and can be used to learn a representation of latent variables, which are not \textit{observed} (\eg labeled) during training, through the relationships they have with other variables that are expressed in the background knowledge. A standard task in that vein is MNIST-Add, which aim is to learn a latent representation of hand-written digits from \textit{observing} their sum.

Finally, we assume throughout the paper that the background knowledge is known \textit{a priori}, which is often not the case in practice. Discovering the \textit{structure} of the task at hand and training the model simultaneously is an important field of research.

\section{Conclusion} \label{sec:conclusion}
In this paper, we introduced a new formalism for supervised multi-label classification with propositional background knowledge, a new neurosymbolic technique to leverage this background knowledge and a new methodology to evaluate the benefits of neurosymbolic techniques across model scales. To the best of our knowledge, we are the first to show that benefits from neurosymbolic techniques do not vanish as the neural network scales. Our results demonstrate that semantic conditioning at inference can significantly improve the performance of a neural-based classification system across network scales, reduce resource needs, while guaranteeing the semantic consistency of outputs.

Future directions for our work may include, amongst others, reproducing our experiments on more datasets (\eg iNaturalist \cite{iNaturalist}, MNIST Sudoku \cite{Augustine2022VisualSP} or Road-R \cite{Giunchiglia2023}), investigating the semi-supervised and weakly-supervised settings, exploring more diverse and expressive logics to represent background knowledge (\eg Answer Set Programming, Relational Logic, First Order Logic, etc.) or automatically extracting the background knowledge from labeled data.

\bibliographystyle{named.bst}
\bibliography{main.bib}

\begin{thebibliography}{}

\bibitem[\protect\citeauthoryear{Ahmed \bgroup \em et al.\egroup }{2022a}]{Ahmed2022spl}
Kareem Ahmed, Stefano Teso, Kai-Wei Chang, Guy den Broeck, and Antonio Vergari.
\newblock Semantic {Probabilistic} {Layers} for {Neuro}-{Symbolic} {Learning}.
\newblock In S~Koyejo, S~Mohamed, A~Agarwal, D~Belgrave, K~Cho, and A~Oh, editors, {\em Advances in {Neural} {Information} {Processing} {Systems}}, volume~35, pages 29944--29959. Curran Associates, Inc., 2022.

\bibitem[\protect\citeauthoryear{Ahmed \bgroup \em et al.\egroup }{2022b}]{Ahmed2022nesyer}
Kareem Ahmed, Eric Wang, Kai-Wei Chang, and Guy Van~den Broeck.
\newblock Neuro-symbolic entropy regularization.
\newblock In James Cussens and Kun Zhang, editors, {\em Proceedings of the Thirty-Eighth Conference on Uncertainty in Artificial Intelligence}, volume 180 of {\em Proceedings of Machine Learning Research}, pages 43--53. PMLR, 01--05 Aug 2022.

\bibitem[\protect\citeauthoryear{Ahmed \bgroup \em et al.\egroup }{2023}]{pmlr-v206-ahmed23a}
Kareem Ahmed, Kai-Wei Chang, and Guy Van~den Broeck.
\newblock Semantic strengthening of neuro-symbolic learning.
\newblock In Francisco Ruiz, Jennifer Dy, and Jan-Willem van~de Meent, editors, {\em Proceedings of The 26th International Conference on Artificial Intelligence and Statistics}, volume 206 of {\em Proceedings of Machine Learning Research}, pages 10252--10261. PMLR, 25--27 Apr 2023.

\bibitem[\protect\citeauthoryear{Aleksander \bgroup \em et al.\egroup }{1986}]{Aleksander86}
Igor Aleksander, Henri Farreny, and Malik Ghallab.
\newblock {AND}/{OR} graphs.
\newblock In Igor Aleksander, Henri Farreny, and Malik Ghallab, editors, {\em Decision and {Intelligence}}, Robot {Technology} {Series}, pages 121--154. Springer US, Boston, MA, 1986.

\bibitem[\protect\citeauthoryear{Augustine \bgroup \em et al.\egroup }{2022}]{Augustine2022VisualSP}
Eriq Augustine, Connor Pryor, Charles Dickens, Jay Pujara, William~Yang Wang, and Lise Getoor.
\newblock Visual sudoku puzzle classification: A suite of collective neuro-symbolic tasks.
\newblock In {\em International Workshop on Neural-Symbolic Learning and Reasoning}, 2022.

\bibitem[\protect\citeauthoryear{Badreddine \bgroup \em et al.\egroup }{2022}]{Badreddine2022}
Samy Badreddine, Artur~d'Avila Garcez, Luciano Serafini, and Michael Spranger.
\newblock Logic {Tensor} {Networks}.
\newblock {\em Artificial Intelligence}, 303:103649, 2022.

\bibitem[\protect\citeauthoryear{Bommasani \bgroup \em et al.\egroup }{2021}]{Bommasani21}
Rishi Bommasani, Drew~A. Hudson, Ehsan Adeli, Russ~B. Altman, Simran Arora, Sydney von Arx, Michael~S. Bernstein, Jeannette Bohg, Antoine Bosselut, Emma Brunskill, Erik Brynjolfsson, Shyamal Buch, Dallas Card, Rodrigo Castellon, Niladri~S. Chatterji, Annie~S. Chen, Kathleen Creel, Jared~Quincy Davis, Dorottya Demszky, Chris Donahue, Moussa Doumbouya, Esin Durmus, Stefano Ermon, John Etchemendy, Kawin Ethayarajh, Li~Fei{-}Fei, Chelsea Finn, Trevor Gale, Lauren Gillespie, Karan Goel, Noah~D. Goodman, Shelby Grossman, Neel Guha, Tatsunori Hashimoto, Peter Henderson, John Hewitt, Daniel~E. Ho, Jenny Hong, Kyle Hsu, Jing Huang, Thomas Icard, Saahil Jain, Dan Jurafsky, Pratyusha Kalluri, Siddharth Karamcheti, Geoff Keeling, Fereshte Khani, Omar Khattab, Pang~Wei Koh, Mark~S. Krass, Ranjay Krishna, Rohith Kuditipudi, and et~al.
\newblock On the opportunities and risks of foundation models.
\newblock {\em CoRR}, abs/2108.07258, 2021.

\bibitem[\protect\citeauthoryear{Darwiche}{2009}]{Darwiche2009}
Adnan Darwiche.
\newblock {\em Modeling and {Reasoning} with {Bayesian} {Networks}}.
\newblock Cambridge University Press, Cambridge, 2009.

\bibitem[\protect\citeauthoryear{Darwiche}{2011}]{Darwiche2011}
Adnan Darwiche.
\newblock {SDD}: {A} {New} {Canonical} {Representation} of {Propositional} {Knowledge} {Bases}.
\newblock In {\em Proceedings of the {Twenty}-{Second} {International} {Joint} {Conference} on {Artificial} {Intelligence}}, 2011.

\bibitem[\protect\citeauthoryear{De~Raedt \bgroup \em et al.\egroup }{2007}]{DeRaedt07}
Luc De~Raedt, Angelika Kimmig, and Hannu Toivonen.
\newblock {ProbLog}: a probabilistic prolog and its application in link discovery.
\newblock In {\em Proceedings of the 20th international joint conference on {Artifical} intelligence}, {IJCAI}'07, pages 2468--2473, San Francisco, CA, USA, January 2007. Morgan Kaufmann Publishers Inc.

\bibitem[\protect\citeauthoryear{Deng \bgroup \em et al.\egroup }{2014}]{Deng2014}
Jia Deng, Nan Ding, Yangqing Jia, Andrea Frome, Kevin Murphy, Samy Bengio, Yuan Li, Hartmut Neven, and Hartwig Adam.
\newblock Large-{Scale} {Object} {Classification} {Using} {Label} {Relation} {Graphs}.
\newblock In David Fleet, Tomas Pajdla, Bernt Schiele, and Tinne Tuytelaars, editors, {\em Computer {Vision} – {ECCV} 2014}, pages 48--64. Springer International Publishing, 2014.

\bibitem[\protect\citeauthoryear{Giannini \bgroup \em et al.\egroup }{2023}]{Marra2019_tnorms}
Francesco Giannini, Michelangelo Diligenti, Marco Maggini, Marco Gori, and Giuseppe Marra.
\newblock T-norms driven loss functions for machine learning.
\newblock {\em Applied Intelligence}, 53(15):18775--18789, February 2023.

\bibitem[\protect\citeauthoryear{Giunchiglia and Lukasiewicz}{2020}]{giunchiglia_coherent_2020}
Eleonora Giunchiglia and Thomas Lukasiewicz.
\newblock Coherent hierarchical multi-label classification networks.
\newblock In {\em Advances in Neural Information Processing Systems}, volume~33, pages 9662--9673. Curran Associates, Inc., 2020.

\bibitem[\protect\citeauthoryear{Giunchiglia \bgroup \em et al.\egroup }{2023}]{Giunchiglia2023}
Eleonora Giunchiglia, Mihaela~Cătălina Stoian, Salman Khan, Fabio Cuzzolin, and Thomas Lukasiewicz.
\newblock {ROAD}-{R}: the autonomous driving dataset with logical requirements.
\newblock {\em Machine Learning}, 112(9):3261--3291, September 2023.

\bibitem[\protect\citeauthoryear{Grandvalet and Bengio}{2004}]{Grandvalet2004}
Yves Grandvalet and Yoshua Bengio.
\newblock Semi-supervised learning by entropy minimization.
\newblock In {\em Proceedings of the 17th {International} {Conference} on {Neural} {Information} {Processing} {Systems}}, {NIPS}'04, pages 529--536, Cambridge, MA, USA, December 2004. MIT Press.

\bibitem[\protect\citeauthoryear{Huang \bgroup \em et al.\egroup }{2017}]{Huang2017}
Gao Huang, Zhuang Liu, Laurens Van Der~Maaten, and Kilian~Q. Weinberger.
\newblock Densely {Connected} {Convolutional} {Networks}.
\newblock In {\em 2017 {IEEE} {Conference} on {Computer} {Vision} and {Pattern} {Recognition} ({CVPR})}, pages 2261--2269, Honolulu, HI, July 2017. IEEE.

\bibitem[\protect\citeauthoryear{Kautz}{2022}]{Kautz2022T}
Henry~A. Kautz.
\newblock The third ai summer: Aaai robert s. engelmore memorial lecture.
\newblock {\em AI Mag.}, 43:93--104, 2022.

\bibitem[\protect\citeauthoryear{Krizhevsky}{2009}]{Krizhevsky2009}
Alex Krizhevsky.
\newblock Learning multiple layers of features from tiny images, 2009.

\bibitem[\protect\citeauthoryear{LeCun \bgroup \em et al.\egroup }{1998}]{LeCun1998}
Yann LeCun, Léon Bottou, Yoshua Bengio, and Patrick Haffner.
\newblock Gradient-based learning applied to document recognition.
\newblock {\em Proceedings of the IEEE}, 86:2278--2323, 1998.

\bibitem[\protect\citeauthoryear{Manhaeve \bgroup \em et al.\egroup }{2021}]{Manhaeve2021}
Robin Manhaeve, Sebastijan Dumančić, Angelika Kimmig, Thomas Demeester, and Luc De~Raedt.
\newblock Neural probabilistic logic programming in {DeepProbLog}.
\newblock {\em Artificial Intelligence}, 298:103504, September 2021.

\bibitem[\protect\citeauthoryear{Marra \bgroup \em et al.\egroup }{2020}]{Marra2020}
Giuseppe Marra, Francesco Giannini, Michelangelo Diligenti, and Marco Gori.
\newblock Integrating learning and reasoning with deep logic models, 2020.

\bibitem[\protect\citeauthoryear{Miller}{1995}]{Miller1995}
George~A. Miller.
\newblock Wordnet.
\newblock {\em Communications of the ACM}, 38:39--41, 11 1995.

\bibitem[\protect\citeauthoryear{Muller and Smith}{2020}]{Muller2020AHL}
Bruce~R. Muller and W.~Smith.
\newblock A hierarchical loss for semantic segmentation.
\newblock In {\em VISIGRAPP}, 2020.

\bibitem[\protect\citeauthoryear{Niepert \bgroup \em et al.\egroup }{2021}]{niepert_implicit_2021}
Mathias Niepert, Pasquale Minervini, and Luca Franceschi.
\newblock Implicit {MLE}: Backpropagating through discrete exponential family distributions.
\newblock In {\em Advances in Neural Information Processing Systems}, volume~34, pages 14567--14579. Curran Associates, Inc., 2021.

\bibitem[\protect\citeauthoryear{Papandreou and Yuille}{2011}]{papandreou_perturb-and-map_2011}
George Papandreou and Alan~L. Yuille.
\newblock Perturb-and-{MAP} random fields: Using discrete optimization to learn and sample from energy models.
\newblock In {\em Proceedings of the 2011 International Conference on Computer Vision}, {ICCV} '11, pages 193--200. {IEEE} Computer Society, 2011.

\bibitem[\protect\citeauthoryear{Rosenfeld \bgroup \em et al.\egroup }{2020}]{Rosenfeld2019}
Jonathan~S. Rosenfeld, Amir Rosenfeld, Yonatan Belinkov, and Nir Shavit.
\newblock A constructive prediction of the generalization error across scales.
\newblock In {\em 8th International Conference on Learning Representations, {ICLR} 2020, Addis Ababa, Ethiopia, April 26-30, 2020}. OpenReview.net, 2020.

\bibitem[\protect\citeauthoryear{Russakovsky \bgroup \em et al.\egroup }{2015}]{Russakovsky2015}
Olga Russakovsky, Jia Deng, Hao Su, Jonathan Krause, Sanjeev Satheesh, Sean Ma, Zhiheng Huang, Andrej Karpathy, Aditya Khosla, Michael Bernstein, Alexander~C. Berg, and Li~Fei-Fei.
\newblock Imagenet large scale visual recognition challenge.
\newblock {\em International Journal of Computer Vision}, 115:211--252, 12 2015.

\bibitem[\protect\citeauthoryear{Russell and Norvig}{2021}]{Russell2021}
Stuart Russell and Peter Norvig.
\newblock {\em Artificial Intelligence A Modern Approach (4th Edition)}.
\newblock Pearson Higher Ed, 2021.

\bibitem[\protect\citeauthoryear{Suciu}{2020}]{Suciu2020}
Dan Suciu.
\newblock Probabilistic databases for all.
\newblock In {\em Proceedings of the ACM SIGACT-SIGMOD-SIGART Symposium on Principles of Database Systems}, pages 19--31. Association for Computing Machinery, 6 2020.

\bibitem[\protect\citeauthoryear{Toda}{1991}]{toda_pp_1991}
Seinosuke Toda.
\newblock {PP} is as hard as the polynomial-time hierarchy.
\newblock {\em {SIAM} Journal on Computing}, 20(5):865--877, 1991.
\newblock Publisher: Society for Industrial and Applied Mathematics.

\bibitem[\protect\citeauthoryear{Van~Horn \bgroup \em et al.\egroup }{2018}]{iNaturalist}
Grant Van~Horn, Oisin Mac~Aodha, Yang Song, Yin Cui, Chen Sun, Alex Shepard, Hartwig Adam, Pietro Perona, and Serge Belongie.
\newblock The inaturalist species classification and detection dataset.
\newblock In {\em 2018 IEEE/CVF Conference on Computer Vision and Pattern Recognition}, pages 8769--8778, 2018.

\bibitem[\protect\citeauthoryear{van Krieken \bgroup \em et al.\egroup }{2022a}]{Krieken2020}
Emile van Krieken, Erman Acar, and Frank van Harmelen.
\newblock Analyzing {Differentiable} {Fuzzy} {Logic} {Operators}.
\newblock {\em Artificial Intelligence}, 302:103602, January 2022.

\bibitem[\protect\citeauthoryear{van Krieken \bgroup \em et al.\egroup }{2022b}]{Krieken2022}
Emile van Krieken, Thiviyan Thanapalasingam, Jakub~M. Tomczak, Frank van Harmelen, and Annette ten Teije.
\newblock A-nesi: A scalable approximate method for probabilistic neurosymbolic inference, 12 2022.

\bibitem[\protect\citeauthoryear{von Rueden \bgroup \em et al.\egroup }{2023}]{VonRueden2023}
Laura von Rueden, Sebastian Mayer, Katharina Beckh, Bogdan Georgiev, Sven Giesselbach, Raoul Heese, Birgit Kirsch, Julius Pfrommer, Annika Pick, Rajkumar Ramamurthy, Michal Walczak, Jochen Garcke, Christian Bauckhage, and Jannis Schuecker.
\newblock Informed {Machine} {Learning} – {A} {Taxonomy} and {Survey} of {Integrating} {Prior} {Knowledge} into {Learning} {Systems}.
\newblock {\em IEEE Transactions on Knowledge and Data Engineering}, 35(1):614--633, January 2023.
\newblock Conference Name: IEEE Transactions on Knowledge and Data Engineering.

\bibitem[\protect\citeauthoryear{Xu \bgroup \em et al.\egroup }{2018}]{Xu2018}
Jingyi Xu, Zilu Zhang, Tal Friedman, Yitao Liang, and Guy Van~Den Broeck.
\newblock A semantic loss function for deep learning with symbolic knowledge.
\newblock In {\em 35th International Conference on Machine Learning, ICML 2018}, volume~12, pages 8752--8760. International Machine Learning Society (IMLS), 2018.

\end{thebibliography}

\end{document}


\title{\textit{Supplementary materials to} \\ Improving Neural-based Classification with Logical Background Knowledge}

\author{Anonymous}

\date{}

\maketitle

\section{Neurosymbolic techniques}
In this section, we give proofs for the properties of neurosymbolic techniques presented in the paper.

First, let us demonstrate that standard modules for independent and categorical classification are particular cases of semantic conditioning on their respective background knowledge:
\begin{proposition} \label{prop:sem_cond_equiv}
    \begin{equation}
        \begin{split}
        \mathsf{L}_{|\top}(\mathbf{a}, \mathbf{y}) = \mathsf{L}_{imc}(\mathbf{a}, \mathbf{y}) \quad & \mathsf{I}_{|\top}(\mathbf{a}) = \mathsf{I}_{imc}(\mathbf{a}) \\
        \mathsf{L}_{|\kappa_{\odot_k}}(\mathbf{a}, \mathbf{y}) = \mathsf{L}_{\odot_k}(\mathbf{a}, \mathbf{y}) \quad & \mathsf{I}_{|\kappa_{\odot_k}}(\mathbf{a}) = \mathsf{I}_{\odot_k}(\mathbf{a})
        \end{split}
    \end{equation}
\end{proposition}

We start by demonstrating the following lemma:
\begin{lemma} \label{lemma:exp}
Let's assume $\mathbf{a} \in \mathbb{R}^k$, then:
    \begin{equation*}
    \mathbf{P}(\mathbf{y} | \mathbf{a}) = \prod_{1 \leq j \leq k} y_j.\mathsf{s}(a_j) + (1-y_j).(1-\mathsf{s}(a_j))
    \end{equation*}
where $\mathsf{s}(\mathbf{a}) = (\frac{e^{a_j}}{1 + e^{a_j}})_{1\leq j \leq k}$ is the sigmoid function. 
\end{lemma}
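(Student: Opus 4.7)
The plan is to unpack the definition of $\mathbf{P}(\mathbf{y}\mid\mathbf{a})$ used in the independent (multi-label) classification setting and then rewrite the per-coordinate Bernoulli mass function in the compact form $y_j\cdot\mathsf{s}(a_j) + (1-y_j)\cdot(1-\mathsf{s}(a_j))$. Concretely, I would first recall that in this setting the logits $a_1,\dots,a_k$ parametrise $k$ conditionally independent Bernoulli variables $y_1,\dots,y_k$, with $\mathbf{P}(y_j = 1\mid a_j) = \mathsf{s}(a_j)$ and $\mathbf{P}(y_j = 0\mid a_j) = 1-\mathsf{s}(a_j)$; this is the standard sigmoid-output interpretation that underlies $\mathsf{I}_{imc}$.

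Next I would invoke conditional independence to factorise the joint: $\mathbf{P}(\mathbf{y}\mid\mathbf{a}) = \prod_{j=1}^{k} \mathbf{P}(y_j\mid a_j)$. The only remaining step is a purely algebraic identity: since $y_j \in \{0,1\}$, the expression $y_j\cdot\mathsf{s}(a_j) + (1-y_j)\cdot(1-\mathsf{s}(a_j))$ evaluates to $\mathsf{s}(a_j)$ when $y_j = 1$ and to $1-\mathsf{s}(a_j)$ when $y_j = 0$, so it coincides with $\mathbf{P}(y_j\mid a_j)$ in both cases. Substituting into the product yields the claimed formula.

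The main obstacle is really definitional rather than technical: one must be explicit that $\mathbf{P}(\mathbf{y}\mid\mathbf{a})$ here refers to the probability assigned by the independent-classification head (i.e., by $\mathsf{I}_{imc}$) so that the sigmoid-Bernoulli parametrisation and the conditional independence of coordinates are both legitimate. Once that is made explicit, the remainder of the proof is a one-line computation per coordinate followed by taking a product.
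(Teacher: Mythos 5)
Your proof assumes precisely what the lemma is asking you to establish. In this paper $\mathbf{P}(\mathbf{y}\mid\mathbf{a})$ is not \emph{defined} as a product of independent sigmoid-Bernoulli factors; it is defined as a globally normalised energy-based (Gibbs) distribution, $\mathbf{P}(\mathbf{y}\mid\mathbf{a}) = \mathbf{E}(\mathbf{y}\mid\mathbf{a})/\mathsf{Z}(\mathbf{E}(\cdot\mid\mathbf{a}))$ with $\mathbf{E}(\mathbf{y}\mid\mathbf{a}) = \prod_{i} e^{a_i y_i}$ and $\mathsf{Z}$ a sum over all $2^k$ binary vectors. The entire content of the lemma is that this globally normalised distribution happens to factorise coordinate-wise into the familiar Bernoulli form --- that is, that the partition function itself factorises as
\begin{equation*}
\mathsf{Z}(\mathbf{E}(\cdot\mid\mathbf{a})) = \sum_{\mathbf{y}\in\{0,1\}^k}\,\prod_{1\le i\le k} e^{a_i y_i} = \prod_{1\le j\le k}\bigl(1+e^{a_j}\bigr),
\end{equation*}
which the paper proves by induction on $k$ (peeling off the last coordinate). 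By taking the conditional-independence Bernoulli parametrisation as your starting definition, you make the statement vacuous: there is then literally nothing left to prove beyond the one-line algebraic identity, and you never touch the object the paper actually works with. You even flag this yourself when you say the obstacle is ``definitional'' --- but resolving that obstacle in your favour is not available here, because the definition is fixed by the paper and it is the energy-based one.

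What you do get right is the final algebraic step: for $y\in\{0,1\}$, $e^{ay}/(1+e^{a}) = y\,\mathsf{s}(a) + (1-y)\,(1-\mathsf{s}(a))$, and the paper uses exactly this identity after the partition function has been computed. To repair your argument, replace your appeal to conditional independence with the actual computation of $\mathsf{Z}$: either the inductive argument the paper gives, or the direct observation that $\prod_j(1+e^{a_j})$ expands into the sum of $\prod_i e^{a_i y_i}$ over all $\mathbf{y}\in\{0,1\}^k$. Independence of the coordinates under $\mathbf{P}(\cdot\mid\mathbf{a})$ is a \emph{consequence} of that factorisation, not a premise you may assume.
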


\begin{proof}
    To prove this, let's prove by recurrence on $k \in \mathbb{N}^*$ that:
    \begin{equation*}
        \forall \mathbf{a} \in \mathbb{R}^k, \mathsf{Z}(\mathbf{E}(\cdot | \mathbf{a})) = \prod_{1 \leq j \leq k} (1 + e^{a_j})
    \end{equation*}
    First, let's assume $k=1$, we have:
    \begin{equation*}
        \forall a \in \mathbb{R}, \mathsf{Z}(\mathbf{E}(\cdot | a)) = \mathbf{E}(0 | a) + \mathbf{E}(1 | a) = e^0 + e^a = 1 + e^a
    \end{equation*}
    Then, let's assume $k>1$, we have:
    \begin{equation*}
    \begin{split}
        \forall \mathbf{a} \in \mathbb{R}^k, \mathsf{Z}(\mathbf{E}(\cdot | \mathbf{a})) & = \sum_{\mathbf{y} \in \{0, 1\}^k} \mathbf{E}(\mathbf{y} | \mathbf{a}) = \sum_{\mathbf{y} \in \{0, 1\}^k} \prod_{1 \leq i\leq k} e^{a_i.y_i} \\
        & = \sum_{\substack{\mathbf{y} \in \{0, 1\}^k \\ y_k=0}} \prod_{1 \leq i\leq k-1} e^{a_i.y_i} + \sum_{\substack{\mathbf{y} \in \{0, 1\}^k \\ y_k=1}} e^{a_k}. \prod_{1 \leq i\leq k-1} e^{a_i.y_i} \\
        & = (1+e^{a_k}) . \sum_{\mathbf{y} \in \{0, 1\}^{k-1}} \prod_{1 \leq i\leq k-1} e^{a_i.y_i} = (1+e^{a_k}) . \mathsf{Z}(\mathbf{E}(\cdot | \mathbf{a}_{\setminus k}))
    \end{split}
    \end{equation*}
    where $\mathbf{a}_{\setminus k} = (a_j)_{1 \leq j \leq k-1}$.
    
    By application of the recurrence hypothesis:
    \begin{equation*}
        \mathsf{Z}(\mathbf{E}(\cdot | \mathbf{a}_{\setminus k})) = \prod_{1 \leq j \leq k-1} (1 + e^{a_j})
    \end{equation*}
    Hence:
    \begin{equation*}
        \forall \mathbf{a} \in \mathbb{R}^k, \mathsf{Z}(\mathbf{E}(\cdot | \mathbf{a})) = \prod_{1 \leq j \leq k} (1 + e^{a_j})
    \end{equation*}
    This gives us:
    \begin{equation*}
        \forall \mathbf{y} \in \{0, 1\}^k, \forall \mathbf{a} \in \mathbb{R}^k, \mathbf{P}(\mathbf{y} | \mathbf{a}) = \frac{\mathbf{E}(\mathbf{y}|\mathbf{a})}{\mathsf{Z}(\mathbf{E}(\cdot | \mathbf{a}))} = \frac{\prod_{1 \leq i\leq k} e^{a_i.y_i}}{\prod_{1 \leq j \leq k} (1 + e^{a_j})} = \prod_{1 \leq j \leq k} \frac{e^{a_i.y_i}}{1 + e^{a_j}}
    \end{equation*}

    Notice that:
    \begin{equation*}
        \forall y \in \{0, 1\}, a \in \mathbb{R}, \frac{e^{a.y}}{1 + e^a} = y.\mathsf{s}(a) + (1-y).(1-\mathsf{s}(a))
    \end{equation*}
    Thus, finally:
    \begin{equation*}
        \forall \mathbf{y} \in \{0, 1\}^k, \forall \mathbf{a} \in \mathbb{R}^k, \mathbf{P}(\mathbf{y} | \mathbf{a}) = \prod_{1 \leq j \leq k} y_j.\mathsf{s}(a_j) + (1-y_j).(1-\mathsf{s}(a_j))
    \end{equation*}
\end{proof}

\begin{proof}[\proofname\ \ref{prop:sem_cond_equiv}.1]
First, according to Lemma \ref{lemma:exp}:
\begin{equation*}
    \mathbf{P}(\mathbf{y} | \mathbf{a}) = \prod_{1 \leq j \leq k} y_j.\mathsf{s}(a_j) + (1-y_j).(1-\mathsf{s}(a_j))
\end{equation*}
Besides, we know that $\forall \mathbf{y} \in \{0, 1\}^k, \mathbf{y} \models \top$, which implies that:
\begin{equation*}
    \forall \mathbf{y} \in \{0, 1\}^k, \forall \mathbf{a} \in \mathbb{R}^k, \mathbf{P}(\mathbf{y} | \mathbf{a}, \top) = \mathbf{P}(\mathbf{y} | \mathbf{a})
\end{equation*}.
This gives:
\begin{equation*}
\begin{split}
    \mathsf{L}_{|\top}(\mathbf{a}, \mathbf{y}) & = - \log(\mathbf{P}(\mathbf{y} | \mathbf{a}, \top)) = - \log(\mathbf{P}(\mathbf{y} | \mathbf{a})) \\
    & = - \log(\prod_j y_j.\mathsf{s}(a_j) + (1-y_j).(1-\mathsf{s}(a_j))) \\
    & = - \sum_j \log(y_j.\mathsf{s}(a_j) + (1-y_j).(1-\mathsf{s}(a_j))) \\
\end{split}
\end{equation*}
Since $\mathbf{y}$ is a binary vector:
\begin{equation*}
\begin{split}
    \mathsf{L}_{|\top}(\mathbf{a}, \mathbf{y}) & = - \sum_j y_j.\log(\mathsf{s}(a_j)) + (1-y_j).\log(1-\mathsf{s}(a_j)) \\
    & = \mathsf{L}_{imc}(\mathbf{a}, \mathbf{y})
\end{split}
\end{equation*}
\end{proof}

\begin{proof}[\proofname\ \ref{prop:sem_cond_equiv}.2]
    \begin{equation*}
    \begin{split}
    \mathsf{I}_{|\top}(\mathbf{a}) & = \underset{\mathbf{y} \in \{0, 1\}^k}{\argmax}\mathbf{P}(\mathbf{y} | \mathbf{a}, \top)
    = \underset{\mathbf{y} \in \{0, 1\}^k}{\argmax}\mathbf{P}(\mathbf{y} | \mathbf{a})
    = \underset{\mathbf{y} \in \{0, 1\}^k}{\argmax}\mathbf{E}(\mathbf{y} | \mathbf{a}) \\
    & = \underset{\mathbf{y} \in \{0, 1\}^k}{\argmax}\prod_{1 \leq i\leq k} e^{a_i.y_i}
     = \underset{\mathbf{y} \in \{0, 1\}^k}{\argmax}[\exp(\sum_{1 \leq i\leq k} a_i.y_i)]
     = \underset{\mathbf{y} \in \{0, 1\}^k}{\argmax}\sum_{1 \leq i\leq k} a_i.y_i \\
    & = \mathbf{1}[\mathbf{a} \geq 0] \\
    & = \mathsf{I}_{imc}(\mathbf{a}) 
    \end{split}
\end{equation*}
\end{proof}

\begin{proof}[\proofname\ \ref{prop:sem_cond_equiv}.3]
The one and only one semantic of $\kappa_{\odot_k}$ gives us:
\begin{equation*}
    \forall \mathbf{y}, \mathbf{y} \models \kappa_{\odot_k} \implies \exists j, \mathbf{y} = \odot_k(j)
\end{equation*}
Hence:
\begin{equation*}
\begin{split}
     \mathbf{P}(\kappa_{\odot_k} | \mathbf{a}) & = \sum_{\mathbf{y} \models \kappa_{\odot_k}} \mathbf{P}(\mathbf{y} | \mathbf{a}) = \sum_{1 \leq j \leq k} \mathbf{P}(\odot_k(j) | \mathbf{a}) \\
     & = \frac{1}{\mathsf{Z}(\mathbf{P}(\cdot | \mathbf{a}))} . \sum_{1 \leq j \leq k} \mathbf{E}(\odot_k(j) | \mathbf{a})
      = \frac{1}{\mathsf{Z}(\mathbf{P}(\cdot | \mathbf{a}))} . \sum_{1 \leq j \leq k} e^{a_j}
\end{split}
\end{equation*}
This leads to:
\begin{equation*}
\begin{split}
    \forall l, \mathbf{P}(\odot_k(l) | \mathbf{a}, \kappa_{\odot_k}) & = \frac{\mathbf{P}(\odot_k(l) \land \kappa_{\odot_k}| \mathbf{a})}{\mathbf{P}(\kappa_{\odot_k}| \mathbf{a})} = \frac{\mathbf{P}(\odot_k(l) | \mathbf{a})}{\mathbf{P}(\kappa_{\odot_k}| \mathbf{a})} = \frac{\mathbf{E}(\odot_k(l) | \mathbf{a})}{\mathsf{Z}(\mathbf{P}(\cdot | \mathbf{a})) . \mathbf{P}(\kappa_{\odot_k}| \mathbf{a})} \\
    & = \frac{e^{a_l}}{\sum_{1 \leq j \leq k} e^{a_j}} = \sigma(\mathbf{a})_l = \langle \sigma(\mathbf{a}), \odot_k(l) \rangle
\end{split}
\end{equation*}

Besides, since we assume consistent labels, we know that there is $l$ such that $\mathbf{y} = \odot_k(l)$, which gives:
\begin{equation*}
\begin{split}
    \mathsf{L}_{|\kappa_{\odot_k}}(\mathbf{a}, \mathbf{y}) & = \mathsf{L}_{|\kappa_{\odot_k}}(\mathbf{a}, \odot_k(l)) = - \log(\mathbf{P}(\odot_k(l) | \mathbf{a}, \kappa_{\odot_k})) \\
    & = - \log(\langle \sigma(\mathbf{a}), \odot_k(l) \rangle) = - \log(\langle \sigma(\mathbf{a}), \mathbf{y} \rangle) \\
    & = \mathsf{L}_{\odot_k}(\mathbf{a}, \mathbf{y})
\end{split}
\end{equation*}
\end{proof}

\begin{proof}[\proofname\ \ref{prop:sem_cond_equiv}.4]
We know that $\kappa_{\odot_k}$ is satisfiable and $\mathbf{P}(\mathbf{y} | \mathbf{a})$ is strictly positive. So we have:
\begin{equation*}
\begin{split}
        \mathbf{y} = \underset{\mathbf{y} \in \{0, 1\}^k}{\argmax}\mathbf{P}(\mathbf{y} | \mathbf{a}, \kappa_{\odot_k}) & \implies \mathbf{y} \models \kappa_{\odot_k} \\
    & \implies \exists l, \mathbf{y} = \odot_k(l)
\end{split}
\end{equation*}

Therefore, we have:
\begin{equation*}
\begin{split}
    \mathsf{I}_{|\kappa_{\odot_k}}(\mathbf{a}) & = \underset{1 \leq j \leq k}{\argmax}\mathbf{P}(\odot_k(j) | \mathbf{a}, \kappa_{\odot_k}) = \underset{1 \leq l \leq k}{\argmax}\mathbf{P}(\odot_k(l) | \mathbf{a}) \\
    & = \underset{1 \leq l \leq k}{\argmax}\langle \mathbf{a}, \odot_k(l) \rangle = \odot_k(\underset{1 \leq l \leq k}{\argmax}(\mathbf{a})) \\
    & = \mathsf{I}_{\odot_k}(\mathbf{a})
\end{split}
\end{equation*}
\end{proof}

Then, we prove syntactic invariance of semantic conditioning (and, incidentally, of semantic conditioning at inference):
\begin{proposition}[Syntactic invariance] \label{prop:syntactic_invariance}
\begin{gather*}
        \kappa \equiv \gamma \implies \mathsf{L}_{|\kappa} = \mathsf{L}_{|\gamma} \\
        \kappa \equiv \gamma \implies \mathsf{I}_{|\kappa} = \mathsf{I}_{|\gamma}
    \end{gather*}
\end{proposition}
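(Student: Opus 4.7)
The plan is to unfold the definition of semantic conditioning and observe that $\mathbf{P}(\mathbf{y} \mid \mathbf{a}, \kappa)$ depends on $\kappa$ only through its set of models. Since $\kappa \equiv \gamma$ is precisely the statement that $\{\mathbf{y} \in \{0,1\}^k : \mathbf{y} \models \kappa\} = \{\mathbf{y} \in \{0,1\}^k : \mathbf{y} \models \gamma\}$, both results should follow almost immediately.

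Concretely, I would first write
\begin{equation*}
    \mathbf{P}(\kappa \mid \mathbf{a}) \;=\; \sum_{\mathbf{y}' \models \kappa} \mathbf{P}(\mathbf{y}' \mid \mathbf{a})
    \quad \text{and} \quad
    \mathbf{P}(\mathbf{y} \mid \mathbf{a}, \kappa) \;=\; \frac{\mathbf{P}(\mathbf{y} \mid \mathbf{a})\,\mathbf{1}[\mathbf{y} \models \kappa]}{\mathbf{P}(\kappa \mid \mathbf{a})},
\end{equation*}
and then observe that replacing $\kappa$ by $\gamma$ leaves both the indicator $\mathbf{1}[\mathbf{y} \models \kappa]$ and the normalizing sum unchanged, because logical equivalence means $\mathbf{y} \models \kappa \Longleftrightarrow \mathbf{y} \models \gamma$. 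Hence $\mathbf{P}(\mathbf{y} \mid \mathbf{a}, \kappa) = \mathbf{P}(\mathbf{y} \mid \mathbf{a}, \gamma)$ for every $\mathbf{a}$ and every $\mathbf{y}$.

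From this pointwise equality of the conditional distributions, both claims drop out. For the loss, $\mathsf{L}_{|\kappa}(\mathbf{a}, \mathbf{y}) = -\log \mathbf{P}(\mathbf{y} \mid \mathbf{a}, \kappa) = -\log \mathbf{P}(\mathbf{y} \mid \mathbf{a}, \gamma) = \mathsf{L}_{|\gamma}(\mathbf{a}, \mathbf{y})$. For the inference map, $\mathsf{I}_{|\kappa}(\mathbf{a}) = \argmax_{\mathbf{y}} \mathbf{P}(\mathbf{y} \mid \mathbf{a}, \kappa) = \argmax_{\mathbf{y}} \mathbf{P}(\mathbf{y} \mid \mathbf{a}, \gamma) = \mathsf{I}_{|\gamma}(\mathbf{a})$, since argmax depends only on the objective function.

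There is no real obstacle here: the proposition is essentially a sanity check that our semantic definition of conditioning is genuinely semantic (i.e.\ syntax-independent), as opposed to approaches like semantic regularization or fuzzy relaxations that operate on the syntactic structure of $\kappa$. The only mild care needed is to note that the normalizer $\mathbf{P}(\kappa \mid \mathbf{a})$ is strictly positive whenever $\kappa$ is satisfiable (so division is well defined), which is inherited from the strict positivity of $\mathbf{P}(\mathbf{y}\mid\mathbf{a})$ already invoked in the proof of Proposition \ref{prop:sem_cond_equiv}.4.
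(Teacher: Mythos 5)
Your proof is correct and follows essentially the same route as the paper: both arguments reduce the claim to the observation that the numerator $\mathbf{P}(\mathbf{y} \land \kappa \mid \mathbf{a})$ and the normalizer $\mathbf{P}(\kappa \mid \mathbf{a})$ of the conditional depend on $\kappa$ only through its set of models, which is preserved by logical equivalence. Your explicit remark on the strict positivity of the normalizer is a small but welcome addition that the paper's proof leaves implicit.
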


\begin{proof}[\proofname\ \ref{prop:syntactic_invariance}]
    Let $\kappa$ and $\gamma$ be two equivalent propositional formulas.
    We have:
    \begin{equation*}
        \mathbf{y} \in \{0, 1\}^k, \mathbf{y} \models \kappa \iff \mathbf{y} \models \gamma
    \end{equation*}
    For a probability distribution $\mathbf{P}$ this implies:
    \begin{gather*}
        \mathbf{P}(\cdot \land \kappa) = \mathbf{P}(\cdot \land \gamma) \\
        \implies \mathbf{P}(\kappa) = \mathbf{P}(\gamma) \\
        \implies \mathbf{P}(\cdot | \kappa) = \mathbf{P}(\cdot | \gamma)
    \end{gather*}
    Which means:
    \begin{gather*}
        \forall \mathbf{y}, -\log(\mathbf{P}(\mathbf{y} | \kappa)) = -\log(\mathbf{P}(\mathbf{y} | \gamma)) \\
        \underset{\mathbf{y} \in \{0, 1\}^k}{\argmax}\mathbf{P}(\mathbf{y} | \mathbf{a}, \kappa) = \underset{\mathbf{y} \in \{0, 1\}^k}{\argmax}\mathbf{P}(\mathbf{y} | \mathbf{a}, \gamma)
    \end{gather*}
    Finally, we have:
    \begin{gather*}
        \mathsf{L}_{|\kappa} = \mathsf{L}_{|\gamma} \\
        \mathsf{I}_{|\kappa} = \mathsf{I}_{|\gamma}
    \end{gather*}
\end{proof}

We now demonstrate that semantic conditioning and semantic conditioning at inference are both \textbf{consistent} with the background knowledge:
\begin{proposition}[Semantic consistency] \label{prop:sem_consistency}
\begin{equation*}
    \forall \mathbf{a}, \mathsf{I}_{|\kappa}(\mathbf{a}) \models \kappa
\end{equation*}
\end{proposition}

\begin{proof}[\proofname\ \ref{prop:sem_consistency}]  \label{proof:sem_consistency}
We assumed $\kappa$ to be satisfiable and we know that for all $\mathbf{a}$, $\mathbf{P}(\cdot | \mathbf{a})$ is strictly positive. Therefore $\mathbf{P}(\cdot | \mathbf{a}, \kappa)$ is strictly positive and we have:
\begin{equation*}
\begin{split}
    \mathbf{y} = \underset{\mathbf{y} \in \{0, 1\}^k}{\argmax}\mathbf{P}(\mathbf{y} | \mathbf{a}, \kappa) & \implies \mathbf{P}(\mathbf{y} | \mathbf{a}, \kappa) > 0 \implies \mathbf{y} \models \kappa
\end{split}
\end{equation*}
Hence:
\begin{equation*}
\forall \mathbf{a}, \mathsf{I}_{|\kappa}(\mathbf{a}) = \underset{\mathbf{y} \in \{0, 1\}^k}{\argmax}\mathbf{P}(\mathbf{y} | \mathbf{a}, \kappa) \models \kappa
\end{equation*}
\end{proof}

Besides, when performing inference based on identical model modules and learned parameters, \textit{sci} \textbf{guarantees} greater or equal accuracy compared to traditional \textit{imc} inference (\ie if $\mathsf{I}_{imc}$ infers the right labels, then $\mathsf{I}_{|\kappa}$ will also infer the right labels): 
\begin{proposition} \label{prop:acc_guarantee}
\begin{equation*} 
\begin{multlined}
    \forall (x, \mathbf{y}) \in \mathcal{D}, \forall \theta \in \Theta, \\
    \mathsf{I}_{imc}(\mathsf{M}_{\theta}(x))=\mathbf{y} \implies \mathsf{I}_{|\kappa}(\mathsf{M}_{\theta}(x))=\mathbf{y}
\end{multlined}
\end{equation*} 
\end{proposition}

\begin{proof} [\proofname\ \ref{prop:acc_guarantee}] \label{proof:acc_guarantee}
Let's assume that:
\begin{equation*}
    \mathsf{I}_{|\kappa}(\mathbf{a}):= \hat{\mathbf{y}} \ne \mathbf{y}
\end{equation*}

Since both $\hat{\mathbf{y}}$ and $\mathbf{y}$ are consistent with $\kappa$ (which we assume satisfiable), we have:
\begin{equation*}
    \frac{\mathbf{P}(\hat{\mathbf{y}}| \mathbf{a}, \kappa)}{\mathbf{P}(\mathbf{y}| \mathbf{a}, \kappa)} =  \frac{\mathbf{P}(\hat{\mathbf{y}} \land \kappa| \mathbf{a})}{\mathbf{P}(\mathbf{y}  \land \kappa| \mathbf{a})} = \frac{\mathbf{P}(\hat{\mathbf{y}}| \mathbf{a})}{\mathbf{P}(\mathbf{y}| \mathbf{a})}
\end{equation*}

Because $\hat{\mathbf{y}} = \mathsf{I}_{|\kappa}(\mathbf{a}) = \underset{\mathbf{y} \in \{0, 1\}^k}{\argmax}\mathbf{P}(\mathbf{y} | \mathbf{a}, \kappa)$:
\begin{equation*}
    \mathbf{P}(\hat{\mathbf{y}}| \mathbf{a}, \kappa) \geq \mathbf{P}(\mathbf{y}| \mathbf{a}, \kappa)
\end{equation*}

Hence:
\begin{equation*}
    \frac{\mathbf{P}(\hat{\mathbf{y}}| \mathbf{a})}{\mathbf{P}(\mathbf{y}| \mathbf{a})} = \frac{\mathbf{P}(\hat{\mathbf{y}}| \mathbf{a}, \kappa)}{\mathbf{P}(\mathbf{y}| \mathbf{a}, \kappa)} \geq 1
\end{equation*}

Therefore:
\begin{equation*}
    \mathbf{P}(\hat{\mathbf{y}}| \mathbf{a}) \geq \mathbf{P}(\mathbf{y}| \mathbf{a}) \implies \mathbf{y} \ne \underset{\mathbf{y} \in \{0, 1\}^k}{\argmax}\mathbf{P}(\mathbf{y} | \mathbf{a}) = \mathsf{I}_(\mathbf{a})
\end{equation*}

Eventually we have:
\begin{equation*}
\begin{multlined}
    \forall (x, \mathbf{y}) \in \mathcal{D}, \forall \theta \in \Theta, \\
    \mathsf{I}_(\mathsf{M}_{\theta}(x))=\mathbf{y} \implies \mathsf{I}_{|\kappa}(\mathsf{M}_{\theta}(x))=\mathbf{y}
\end{multlined}
\end{equation*}
\end{proof}

Finally, it is interesting to notice that under the consistent label hypothesis:
\begin{proposition}\label{lem:negative_regularization}
\begin{equation}
    \mathsf{L}_{|\kappa}(\mathbf{a}, \mathbf{y}) = - \log(\mathbf{P}(\mathbf{y} | \mathbf{a})) + \log(\mathbf{P}(\kappa | \mathbf{a})) = \mathsf{L}_{r(-1)}(\mathbf{a}, \mathbf{y}) 
\end{equation}
\end{proposition}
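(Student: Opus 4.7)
The plan is to unfold the definition of $\mathsf{L}_{|\kappa}$ and apply a Bayes-style decomposition that is made especially clean by the consistent label hypothesis. By definition, $\mathsf{L}_{|\kappa}(\mathbf{a}, \mathbf{y}) = -\log \mathbf{P}(\mathbf{y} \mid \mathbf{a}, \kappa)$, and the conditional probability can be rewritten as $\mathbf{P}(\mathbf{y} \land \kappa \mid \mathbf{a}) / \mathbf{P}(\kappa \mid \mathbf{a})$, which is well-defined since $\kappa$ is assumed satisfiable and $\mathbf{P}(\cdot \mid \mathbf{a})$ is strictly positive.

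Next, I would invoke the consistent label hypothesis: $\mathbf{y} \models \kappa$, so the event $\mathbf{y} \land \kappa$ coincides with the event $\mathbf{y}$, and therefore $\mathbf{P}(\mathbf{y} \land \kappa \mid \mathbf{a}) = \mathbf{P}(\mathbf{y} \mid \mathbf{a})$. Substituting this into the previous display and taking $-\log$ of the quotient yields
\begin{equation*}
    \mathsf{L}_{|\kappa}(\mathbf{a}, \mathbf{y}) = -\log \mathbf{P}(\mathbf{y} \mid \mathbf{a}) + \log \mathbf{P}(\kappa \mid \mathbf{a}),
\end{equation*}
which is the first equality of the proposition.

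For the second equality, I would simply appeal to the definition of the semantic regularization loss $\mathsf{L}_{r(\lambda)}(\mathbf{a}, \mathbf{y}) = -\log \mathbf{P}(\mathbf{y} \mid \mathbf{a}) - \lambda \log \mathbf{P}(\kappa \mid \mathbf{a})$ introduced in the main paper, and instantiate it at $\lambda = -1$. The two expressions then coincide term-by-term, finishing the proof.

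I do not anticipate a real obstacle here: the argument is essentially a two-line manipulation, and the only subtle point is to justify that the conjunction $\mathbf{y} \land \kappa$ reduces to $\mathbf{y}$, which is exactly what the consistent label hypothesis provides. The only care needed is to state the positivity assumptions (satisfiability of $\kappa$ and strict positivity of $\mathbf{P}(\cdot \mid \mathbf{a})$) so that the division by $\mathbf{P}(\kappa \mid \mathbf{a})$ and the logarithms are all well-defined.
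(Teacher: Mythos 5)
Your proof is correct and follows essentially the same route as the paper's: unfold $\mathsf{L}_{|\kappa}$, rewrite the conditional as $\mathbf{P}(\mathbf{y} \land \kappa \mid \mathbf{a})/\mathbf{P}(\kappa \mid \mathbf{a})$, use the consistent label hypothesis to collapse $\mathbf{y} \land \kappa$ to $\mathbf{y}$, and match the result with $\mathsf{L}_{r(\lambda)}$ at $\lambda = -1$. Your explicit justification of the positivity assumptions and of the step $\mathbf{P}(\mathbf{y} \land \kappa \mid \mathbf{a}) = \mathbf{P}(\mathbf{y} \mid \mathbf{a})$ is, if anything, slightly more careful than the paper's.
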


\begin{proof}[\proofname\ \ref{lem:negative_regularization}]  
\begin{equation*}
\begin{split}
    \mathsf{L}_{|\kappa}(\mathbf{a}, \mathbf{y}) & = - \log(\mathbf{P}(\mathbf{y} | \mathbf{a}, \kappa)) = - \log(\frac{\mathbf{P}(\mathbf{y} \land \kappa| \mathbf{a})}{\mathbf{P}(\kappa| \mathbf{a})}) \\
    & = - \log(\mathbf{P}(\mathbf{y}  \land \kappa | \mathbf{a})) + \log(\mathbf{P}(\kappa | \mathbf{a})) = - \log(\mathbf{P}(\mathbf{y} | \mathbf{a})) + \log(\mathbf{P}(\kappa | \mathbf{a})) \\
    & = \mathsf{L}_{r(-1)}(\mathbf{a}, \mathbf{y})
\end{split}
\end{equation*}
\end{proof}

Thus, the loss module of semantic conditioning corresponds to that of semantic regularization with a $\lambda = -1$. Although it seems counter-intuitive that two systems trying to reach the same goal end up using "opposite regularization terms" in their loss module, this is justified by the different inference modules used in each system.

Hence, an implementation for $\mathsf{L}_{r(\lambda)}$ can be used for $\mathsf{L}_{|\kappa}$. Besides, by training systems with regularized loss modules with different $\lambda$ and evaluating with both $\mathsf{I}_{|\kappa}$ and $\mathsf{I}_{imc}$, we can span the entire spectrum of techniques in \textit{imc}, \textit{sr}, \textit{sc} and \textit{sci}.



\section{Experimental details}
\subsection{Implementation}
\subsubsection{Categorical classification}
The architectural design for categorical classification on MNIST is a simple Convolutional Neural Network (CNN) \cite{LeCun1998}, as shown on Listing 1. We trained networks with \texttt{num\_layers} from 1 up to 9 layers.
\begin{lstlisting}[language=Python, escapeinside={($}{$)}, caption=Our TinyNet architecture (PyTorch implementation)]
class TinyCNNs(nn.Module):
    def __init__(self, num_classes: int = 10,
                        num_layers: int = 1,
                        in_channels=1):
        super().__init__()
        convs = []
        for i in range(num_layers):
            convs.append(nn.Conv2d(2**(i//2)*in_channels,
                                    2**((i+1)//2)*in_channels,
                                    5,
                                    padding=2))
            convs.append(nn.ReLU())
        self.convs = nn.Sequential(*convs)
        # self.AdaptativeScale = int(5*2**(num_layers/2))
        self.pool = nn.AdaptiveAvgPool2d(5)
        self.fc = nn.Linear(25*2**(num_layers//2)*in_channels,
                            num_classes)

    def forward(self, x):
        x = self.convs(x)
        x = F.relu(x)
        x = self.pool(x)
        x = torch.flatten(x, start_dim=1)
        x = self.fc(x)
        return x
\end{lstlisting}

Then, to complete the neural based system: we use the loss module shown on Listing 2 (with varying $\lambda$) and inference modules shown in Listings 2 and 3 for \textit{imc} and \textit{}sci respectively.

\begin{lstlisting}[language=Python, caption=\textit{rsc} loss]
scores = self.model(x)
energies = torch.gather(self.scores, 1, y.unsqueeze(dim=1))
log_z = torch.sum(torch.log(torch.exp(self.scores).add(1)),
                            dim=1, keepdim=True)
mc_log_z = torch.log(torch.sum(torch.exp(self.scores),
                                dim=1, keepdim=True))

loss = torch.mean(torch.sub(torch.add(log_z.mul(1+self.lambda),
                                    mc_log_z.mul(-self.lambda)),
                            energies))

return loss
\end{lstlisting}

\begin{lstlisting}[language=Python, caption=\textit{imc} inference]
scores = self.model(x)
return torch.gt(scores, 0)
\end{lstlisting}

\begin{lstlisting}[language=Python, caption=\textit{sci} inference]
scores = self.model(x)
_, idx_max = torch.max(self.scores, dim=1)
return F.one_hot(idx_max, num_classes=scores.shape[1])
\end{lstlisting}

\subsubsection{Hierarchical classification}
The architectural design for hierarchical classification tasks Cifar and ImageNet was based on DenseNets \cite{Huang2017}. We used the \texttt{torchvision} implementation with a naive scaling strategy to create DenseNets of various \texttt{size}, as shown on Listing 5. We trained network with \texttt{size} from 1 up to 8.

\begin{lstlisting}[language=Python, caption=DenseNet scaling]
from torchvision.models.densenet import _densenet

network = _densenet(growth_rate=32,
                    block_config=(6, 12, (size+3)*8,(size+1)*8),
                    num_init_features=64,
                    weights=None,
                    progress=True)
\end{lstlisting}

For the loss and inference modules, we followed \cite{Deng2014} and expressed the hierarchical and exclusion relations as a HEX-graph $H$, then compiles this HEX-graph into a HEX-layer that can compute $\mathsf{I}_{\kappa_H}$ using a sparse max-product message passing algorithm with Viterbi decoding (see \texttt{fastHEXLayer.py}).

To do so, for ImageNet, we first extract hierarchical links from the \texttt{wn\_hyp.pl} file and arrange them into a directed graph. Then, for our experiments, only 100 leaf nodes are randomly sampled from the total 1,000 and the directed graph is trimmed of any node not connected to a sampled leaf node. We also prune nodes that only have one parent and one child to avoid the case where two nodes have the same set of labeled samples (which would make them indistinguishable for the network). This directed graph is fed into a \texttt{HEXGraph} object, which computes the sparse and dense version of the hierarchical and exclusion matrices, builds the corresponding junction tree (using the \texttt{JunctionTree} object) and records the valid states of each clique and the sum-product matrix of the junction tree. The results of these compilations steps can be saved and loaded: the specific files used in this experiment are \texttt{./ImageNet/compilations/100p*}. During training, this \texttt{HEXGraph} is loaded from compilation files and passed on to the \texttt{HEXLayer} which contains the methods to perform \textit{sci}. The code to perform those compilation steps is found in \texttt{ImageNetProcessing.ipynb}.

For Cifar-100, the hierarchy has only two levels (macro and fine-grained classes) and can be retrieved directly online and fed to the \texttt{HEXGraph} object.

\subsection{Metrics}
There are plenty of metrics that can be used to evaluate a classification system.

Simple accuracy averages how many classes were correctly labeled on each sample, however, since multi-label classification datasets with background knowledge are often highly unbalanced (far more negative classes that positive ones) it is often unfit to the task. Precision, recall and f1-score metrics can help tackle with this issue, but they lose track of the semantics of the task.

Semantic consistency counts how many outputs are consistent with the background knowledge. Since \textit{sci} is provably consistent, this metric is of little interest for us.

Metrics that are not based on the binary outputs but need to access probability scores associated with each classe, like threshold metrics (\eg map@50, map@75, auc) or top-k metrics, are not accessible to our classification system as is.

Eventually, we decide to use exact accuracy, which counts how many samples are perfectly labeled: this is the most demanding metric since a single mistake disqualifies the whole output.

\subsection{Hyperparameters}

\paragraph{Optimizer} We use Adam with a learning rate of $10^{-4}$ for all tasks.

\paragraph{Seeds} We set seeds manually with \texttt{torch.manual\_seed(args.seed)}. We used $6$ seeds ($[0, 1, 2, 3, 4, 5]$) for MNIST, $3$ seeds ($[0, 1, 2]$) for Cifar and ImageNet.

\paragraph{Batch size} We use a batch size of 8 for MNIST and Cifar, and increase to 64 for ImageNet to speed up training.

\section{Modeling}
To model the relation between the collective accuracy of the system and its number of parameters or training samples, we use the following functional form:
\begin{equation}\label{eq:model_eq}
    a(m) = \alpha.m^{-b} + a_{\infty}
\end{equation}

It is documented in \cite{Hestness2017DeepLS,Rosenfeld2019,Kaplan2020ScalingLF,Hernandez2021ScalingLF,ghorbani_scaling_2021} that exponential models like in Equation \ref{eq:model_eq} provides a good surrogate model for the performance of a neural network as a function of its size, and that it extrapolates well to large networks even when fitted on small networks.

We fit surrogate models of this kind on the accuracy points of all techniques and use them to quantitatively estimate performance and frugality benefits. To fit the curves of surrogate models we used the \texttt{curve\_fit} function from the \texttt{scipy.optimize} library, which optimizes parameters of the function to converge towards the least squares on given data points. We illustrate this on MNIST and Cifar on Figure \ref{fig:surrogate}.

\begin{figure}[h]
\begin{subfigure}{0.49\textwidth}
    \centering
    \includegraphics[width=\textwidth]{results/MNIST/mnist_modelsize@100wSurrogate.png}
    \caption{MNIST}
\end{subfigure}
\begin{subfigure}{0.49\textwidth}
    \centering
    \includegraphics[width=\textwidth]{results/cifar/cifar_densenets_modelsize@100wSurrogate.png}
    \caption{Cifar}
\end{subfigure}

\caption{Fitting surrogate models on accuracy points}
\label{fig:surrogate}
\end{figure}

Table \ref{tab:ainf} displays the asymptotic accuracies of surrogate models for each technique on both MNIST and Cifar. Performance benefits of neurosymbolic techniques are illustrated with the gain curve, which represents the difference between both surrogate models. The \textbf{asymptotic gain} (\ie the limit of the gain curve when the size of the model increases to infinity) is the difference between asymptotic accuracies of the considered neurosymbolic technique and \textit{imc}. This can be used as a multi-scale metric to summarize the performance benefits of a neurosymbolic technique.

\begin{table}[h]
    \centering
    \begin{tabular}{|c||c|c|c|c|}
        \hline
        Technique & \textit{imc} & \textit{sr} & \textit{sci} & \textit{sc} \\
        \hline
        MNIST & 97.7 & 97.8 & 99.0 & 99.0 \\
        \hline
        Cifar & 68.4 & 69.5 & 70.5 & 71.8 \\
        \hline
    \end{tabular}
    \caption{Asymptotic accuracies}
    \label{tab:ainf}
\end{table}

To illustrate the potential of \textit{sci} to build \textbf{frugal} systems, we plot the curves of resource savings $\epsilon(m)$ and compression ratio $\tau(m)$, which respectively measure how many parameters of our neural model could we spare (while maintaining the same level of performance) by using \textit{sci} instead of \textit{imc}, and how much smaller such a model (or training set) would be, \ie:
\begin{gather}\label{eq:params_gains}
    \epsilon(m) = m - a_{|\kappa}^{-1}(a_{|imc}(m)) \\
    \tau(m) = \frac{\epsilon(m)}{m}
\end{gather}

Both curves are represented for MNIST and Cifar on Figure \ref{fig:savings}. They show an increasing compression ratio on both tasks, meaning that \textit{sci} becomes \textbf{increasingly valuable} (in terms of frugality) as the network scales.

\begin{figure}[h]
\begin{subfigure}{0.49\textwidth}
    \centering
    \includegraphics[width=\textwidth]{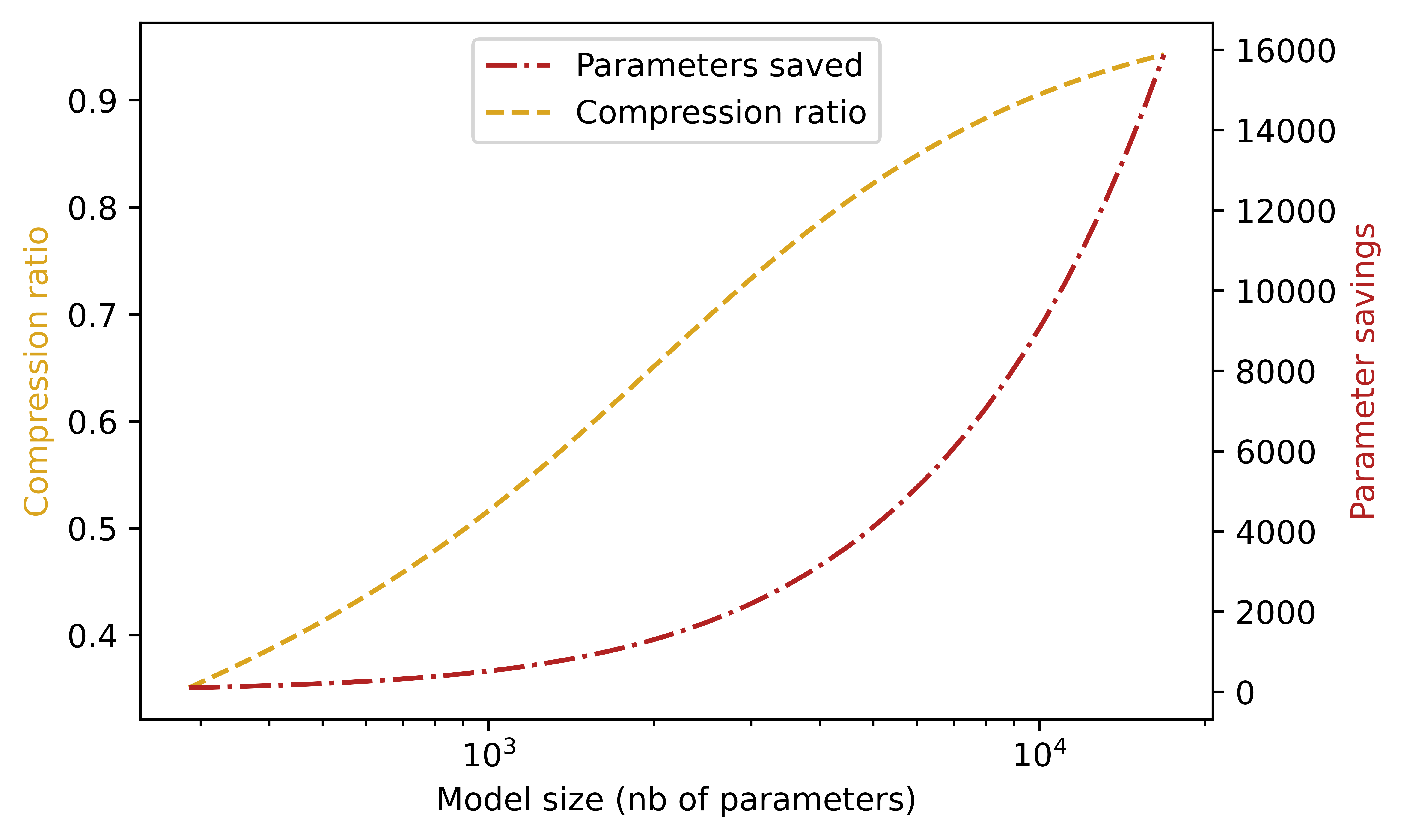}
    \caption{MNIST}
\end{subfigure}
\begin{subfigure}{0.49\textwidth}
    \centering
    \includegraphics[width=\textwidth]{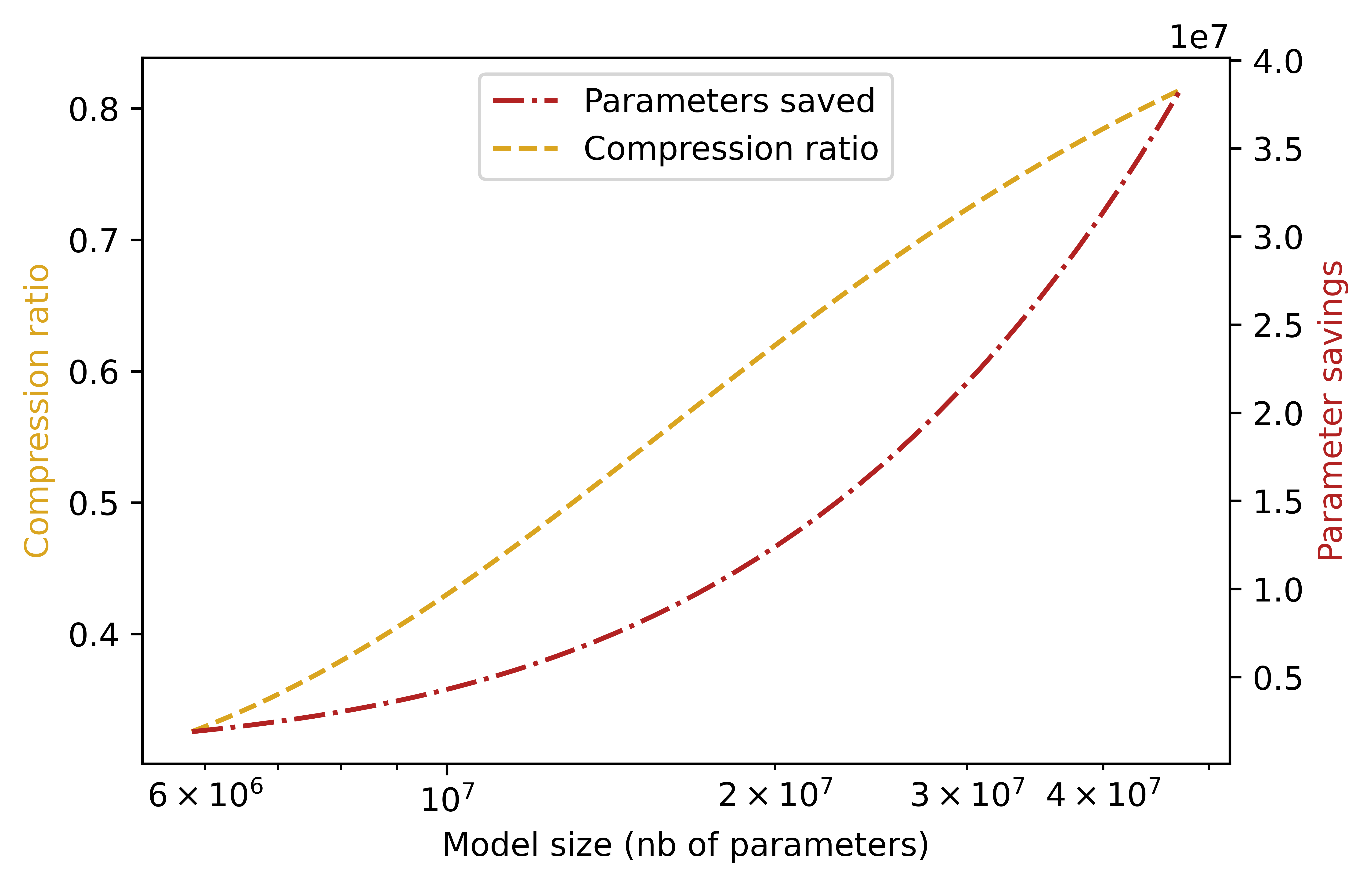}
    \caption{Cifar}
\end{subfigure}

\caption{Parameter savings and model compression potential of \textit{sci}}
\label{fig:savings}
\end{figure}

Some would argue that given the functional form used in Equation \ref{eq:model_eq} and asymptotic gains displayed in Table \ref{tab:ainf} it is expected to get infinite compression. Indeed, a strictly positive asymptotic gain while using exponential models implies that the compression ratio will diverge towards infinity as the network scales towards infinity. However, what Figure \ref{fig:savings} shows is that the compression ratio is monotonically increasing and that it is already quite high in the region of typical network sizes. Both these facts are not necessarily implied by a strictly positive asymptotic gain. Moreover, it is theoretically possible to observe very high compression ratios with insignificant asymptotic gains. All these observations point towards the fact that the analysis of frugality-driven benefits is not redundant with that of performance-driven benefits.

\clearpage
\printbibliography